\newif\ifarxiv
\renewcommand\vec[1]{{\mathbf{#1}}}
\newcommand{\vecA}{\mathbf{A}}
\newcommand{\vecI}{\mathbf{I}}
\newcommand{\vecW}{\mathbf{W}}
\newcommand{\vecX}{\mathbf{X}}
\newcommand{\vecbeta}{\boldsymbol{\beta}}
\newcommand{\vecf}{\mathbf{f}}
\newcommand{\vecp}{\mathbf{p}}
\newcommand{\vecv}{\mathbf{v}}
\newcommand{\vecx}{\mathbf{x}}
\newcommand{\vecy}{\mathbf{y}}
\newcommand{\removed}[1]{}
\def\eqref#1{equation~\ref{#1}}
\def\1{\bm{1}}
\DeclareMathAlphabet{\mathsfit}{\encodingdefault}{\sfdefault}{m}{sl}
\SetMathAlphabet{\mathsfit}{bold}{\encodingdefault}{\sfdefault}{bx}{n}
\newcommand{\predictor}{\mathbf{f}}
\newcommand{\defEq}{\stackrel{.}{=}}
\newcommand{\XCal}{\mathscr{X}}
\newcommand{\YCal}{\mathscr{Y}}
\newcommand{\Real}{\mathbb{R}}
\newcommand\teacher[1]{{#1}^{\textsf{te}}}
\newcommand\student[1]{{#1}^{\textsf{st}}}
\newcommand\gt[1]{{#1}^{\star}}
\newcommand\kd[1]{\tilde{{#1}}}
\theoremstyle{plain}
\newtheorem{theorem}{Theorem}[section]
\newif\ifdraft
\title{On student-teacher deviations in distillation:\\
    does it pay to disobey?}
\author{%
  \large{Vaishnavh Nagarajan} \\
  \texttt{vaishnavh@google.com} \\
  \And
  \large{Aditya Krishna Menon} \\
  \texttt{adityakmenon@google.com} \\
   \And
 \large{Srinadh Bhojanapalli} \\
  \texttt{bsrinadh@google.com} \\
    \AND
  \large{Hossein Mobahi} \\
  \texttt{hmobahi@google.com} \\
    \And
  \large{Sanjiv Kumar} \\
  \texttt{sanjivk@google.com} \\
}
\affiliation{Google Research}
\author{%
  Vaishnavh Nagarajan \\
  Google Research \\
  \texttt{vaishnavh@google.com} \\
  \And
  Aditya Krishna Menon \\
  Google Research \\
  \texttt{adityakmenon@google.com} \\
   \And
 Srinadh Bhojanapalli \\
  Google Research \\
  \texttt{bsrinadh@google.com} \\
    \And
  Hossein Mobahi\\
Google Research \\
  \texttt{hmobahi@google.com} \\
    \And
  Sanjiv Kumar \\
  Google Research \\
  \texttt{sanjivk@google.com} \\
}
\begin{document}

\doparttoc %
\faketableofcontents %

\part{} %

\maketitle

\begin{abstract}
Knowledge distillation (KD) has been widely used to improve the test accuracy of a ``student'' network, by training it to mimic the soft probabilities of a trained ``teacher'' network.
Yet, it has been shown in recent work that, despite being trained to fit the teacher's probabilities, the student may not only significantly deviate from the teacher probabilities, but may also outdo than the teacher in performance.
Our work aims to reconcile this seemingly
paradoxical observation. Specifically, we characterize the precise nature of the student-teacher deviations, and argue how they {\em can} co-occur with better generalization.  First, through experiments on  image and language data, we identify that these probability deviations correspond to the student systematically {\em exaggerating} the confidence levels of the teacher.
Next, we theoretically and empirically establish another form of exaggeration in some simple settings: KD exaggerates the implicit bias of gradient descent in converging faster along the top eigendirections of the data. Finally, we tie these two observations together: we demonstrate that the exaggerated bias of KD
can simultaneously result in both (a) the exaggeration of confidence and (b) the improved generalization of the student, thus offering a resolution to the apparent paradox. 
Our analysis brings existing theory and practice closer by considering the role of gradient descent in KD
and by demonstrating the exaggerated bias effect in both theoretical and empirical settings.

\end{abstract}

\section{Introduction}

In {knowledge distillation (KD)}~\citep{Bucilua:2006,Hinton:2015},  one trains a small ``student'' model to match the predicted soft label distribution of a large ``teacher'' model, rather than the one-hot labels that the training data originally came with.
This has emerged as a highly effective model compression technique, and has inspired an actively developing literature that has sought to
explore applications of distillation to various settings~\citep{Radosavovic:2018,Furlanello:2018,Xie:2019},
design more effective variants ~\citep{Romero:2015,Anil:2018,Park:2019,Beyer:2022},
and better understand theoretically when and why distillation is effective~\citep{Lopez-Paz:2016,Phuong:2019,mobahi20self,zhu20towards,Cotter:2021,menon21statistical,Dao:2021,Ren:2022,kaplun22kd,Harutyunyan:2023,pham2022revisiting}.

\begin{figure}[!t]
\centering
\subfloat[Exaggeration of confidence]{\label{fig:intro-underfitting}
\includegraphics[width=.25\linewidth]{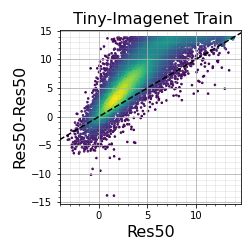}
\includegraphics[width=.25\linewidth]{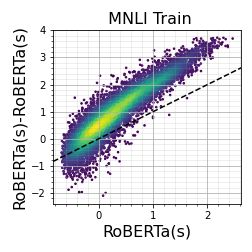}
}
\subfloat[Exaggeration of implicit bias]{\label{fig:intro-ev}
\includegraphics[width=.25\linewidth]{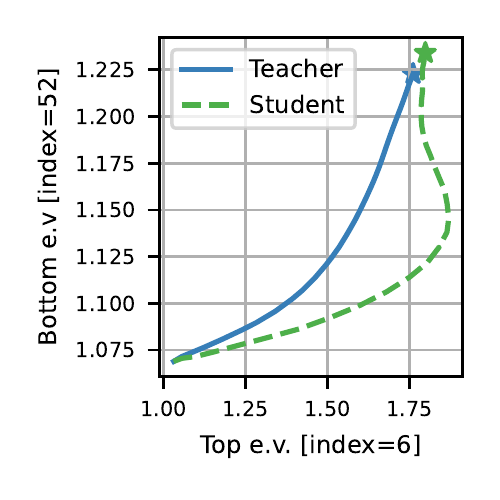}
\includegraphics[width=.25\linewidth]{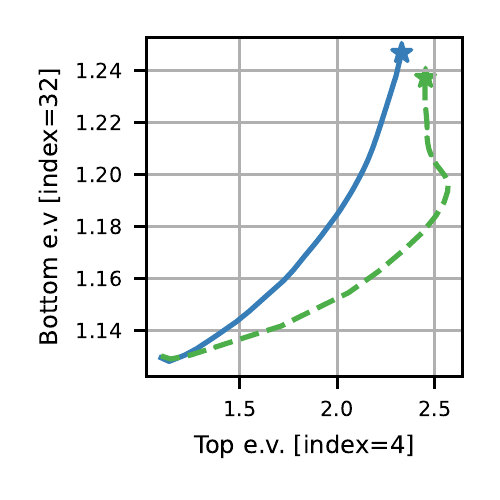}
}
    \caption{
     \textbf{(a): Distilled student exaggerates confidence of one-hot-loss trained teacher.}
    For each training sample $(x, y)$, we plot $X=\phi( {\teacher{p}_{\teacher{y}}( x )} )$ versus $Y=\phi( {\student{p}_{\teacher{y}}( x )} )$, which are the teacher and student probabilities on the teacher's predicted label $\teacher{y}$, transformed monotonically by $\phi(u) = \log\left[ u / (1 - u) \right]$. Note that this is a density plot where higher the brightness, higher the number of datapoints with that $X$ and $Y$ value.
    We find that the distilled student predictions deviate from the $X=Y$ line by either underfitting teacher's low confidence points (i.e., we find $Y \leq X$ for small $X$) and/or overfitting teacher's high confidence points (i.e., $Y \geq X$ for large $X$).
    See~\S\ref{sec:margin} for details.  \textbf{(b) Distillation exaggerates implicit bias of one-hot gradient descent training.} We consider an MLP  trained on an MNIST-based dataset. Each plot shows the time-evolution of the $\ell_2$ norm of the first layer parameters projected onto two randomly picked eigendirections; the $\star$'s corresponds to the final parameters. 
    First observe that the one-hot-trained teacher moves faster towards its final $X$ axis value than its final $Y$ axis value; this corroborates the well-known implicit bias of standard GD training. But crucially, we find that distillation \textit{exaggerates} this bias: the student moves even faster towards its final $X$ axis value. 
    In ~\S\ref{sec:reconcile} we argue how this exaggerated bias manifests as the exaggerated confidence in Fig~\ref{fig:intro-underfitting}. 
    }
    \label{fig:intro}
\end{figure}

 On paper, distillation is intended to help by transferring the soft probabilities of the (one-hot loss trained) teacher  over to the student. Intuitively, we would desire this transfer to be perfect: the more a student fails to match the teacher's probabilities, the more we expect its performance to suffer. After all, in the extreme case of a student that simply outputs uniformly random labels, the student's performance would be as poor as it can get. 

However, recent work by \citet{stanton21does} has challenged our presumptions underlying what distillation supposedly does, and how it supposedly helps. First, they show in practice that the student does {\em not} adequately match the teacher probabilities, despite being trained to fit them. Secondly, students that do successfully match the teacher probabilities, may generalize {worse} than students that show some level of deviations from the teacher \citep[Fig. 1]{stanton21does}.
Surprisingly, this deviation occurs even in self-distillation settings \citep{Furlanello:2018,zhang19selfdistillation} where the student and the teacher have identical architecture and thus the student has the potential to fit the teacher's probabilities to full precision. Even more remarkably, the self-distilled student not only deviates from the teacher's probabilities, but also  {supercedes} the teacher in performance.

How is it possible for the student to deviate from the teacher's probabilities, and yet counter-intuitively improve its generalization beyond the teacher? 
Our work aims to reconcile this paradoxical behavior. On a high level, our answer is that while \textit{arbitrary} deviations in probabilities may indeed hurt the student, in reality there are certain {\em systematic} deviations in the student's probabilities. Next, we argue, these systematic deviations and improved generalization co-occur because they arise from the same effect: a (helpful) form of regularization that is induced by distillation. We describe these effects in more detail in our list of {key} contributions below:

\begin{enumerate}[label=(\roman*),topsep=0pt,itemsep=0pt,leftmargin=16pt]
    \item \textbf{Exaggerated confidence}: Across a wide range of 
    architectures 
    and image \& language classification data (spanning more than $20$ settings in total)
    we demonstrate (\S\ref{sec:margin})
    that \emph{the student exaggerates the teacher's confidence}. Most typically, on low-confidence points of the teacher, the student achieves even lower confidence than the teacher; in other settings, on high-confidence points, the student achieves even higher confidence than the teacher  (Fig~\ref{fig:intro-underfitting}).  Surprisingly, we find such deviations even with self-distillation, implying that this cannot be explained by a mere student-teacher capacity mismatch. 
    This reveals a {systematic} unexpected behavior of distillation. 
    
    \item \textbf{Exaggerated implicit bias:}
    Next, we demonstrate another form of exaggeration: in some simple settings, self-distillation {exaggerates} the implicit bias of gradient descent (GD) in converging faster along the top data eigendirections. We demonstrate this theoretically
    (Thm~\ref{thm:sparsification}) for linear regression, as a {gradient-descent} counterpart to the seminal {non}-gradient-descent result of \citet{mobahi20self} (see ~\S\ref{sec:bridging} for key differences). Empirically, we provide the first demonstration of this effect for the cross entropy loss on neural networks  (Fig~\ref{fig:intro-ev} and ~\S\ref{sec:verify-exaggeration}).
    
    \item \textbf{Reconciling the paradox:} Finally, we tie the above observations together to resolve our paradox. We empirically argue how the exaggerated bias towards top eigenvectors causes the student to both (a) exaggerate confidence levels and (b) outperform the teacher (see ~\S\ref{sec:reconcile}). This presents a resolution for how deviations in probabilities can co-occur with improved performance.
\end{enumerate}

\subsection{Bridging key gaps between theory and practice}
\label{sec:bridging}

The resolution above helps paint a more coherent picture of theoretical and empirical studies in distillation that were otherwise disjoint. \citet{mobahi20self} proved that distillation exaggerates the bias of a {\em non}-gradient-descent model,
one that is picked from a Hilbert space with
 {explicit} $\ell_2$ regularization. It was an open question as to whether this bias exaggeration effect is indeed relevant to settings we care about in practice. Our work establishes its relevance to practice, ultimately also drawing connections to the disjoint empirical work of \citet{stanton21does}. 
 
In more explicit terms, we establish relevance to practice in the following ways:
\begin{enumerate}[topsep=0pt,itemsep=0pt,leftmargin=16pt,label=(\arabic*)]
    \item We provide a formal proof of the exaggerated bias of distillation for a (linear) GD setting, rather than a non-GD setting (Theorem~\ref{thm:sparsification}).
    \item We empirically verify the exaggerated bias of KD in more general settings e.g., a multi-layer perceptron (MLP) and a convolutional neural network (CNN) with cross-entropy loss (\S\ref{sec:verify-exaggeration}). This provides the first
    practical evidence of the bias exaggeration affect of \cite{mobahi20self}. 
    \item We relate the above bias to the student-teacher deviations in \citet{stanton21does}. Specifically, we argue that the exaggerated bias manifests as exaggerates student confidence levels, which we report on a wide range of image and language datasets. 
    \item Tangentially, our findings also help clarify when to use early-stopping and loss-switching in distillation (\S\ref{sec:confounding}).
\end{enumerate}

As a more general takeaway for practitioners, our findings suggest that {\em not} matching the teacher probabilities exactly can be a good thing, provided the mismatch is not arbitrary. Future work may consider devising ways to explicitly induce careful deviations that further amplify the benefits of distillation e.g., by using confidence levels to reweight or scale the temperature on a per-instance basis. 
\section{Background and Notation}

\label{sec:background-multiclass}

Our interest in this paper is \emph{multiclass classification} problems.
This involves learning a \emph{classifier} $h \colon \XCal \to \YCal$ which,
for input $\vecx \in \XCal$,
predicts 
the most likely label $h( \vecx ) \in \YCal = [ K ] \defEq \{ 1, 2, \ldots, K \}$.
Such a classifier is typically implemented by 
computing \emph{logits}  $\predictor \colon \XCal \to \Real^K$ that score the plausibility of each label,
and then computing $h( \vecx ) = \operatorname{argmax}_{y \in \YCal} \, {f_y( \vecx )}$.
In neural models, these logits are parameterised as
$\mathbf{f}( \vecx ) = \mathbf{W}^\top \mathbf{Z}( \vecx )$ for learned weights $\mathbf{W} \in \Real^{D \times K}$ and embeddings $\mathbf{Z}( \vecx ) \in \Real^D$.
One may learn such logits by minimising the \emph{empirical loss} on a training sample $S \defEq \{ ( x_n, y_n ) \}_{n = 1}^N$:
\begin{equation}
    \label{eqn:empirical-risk-one-hot}
    R_{\rm emp}( \predictor ) \defEq 
    \frac{1}{N} \sum_{n \in [N]} \mathbf{e}( {y_n} )^{\top} \ell( \predictor( \vecx_n ) ),
\end{equation}
where 
$\mathbf{e}( y ) \in \{ 0, 1 \}^K$ denotes the \emph{one-hot encoding} of $y$,
$\ell( \cdot ) \defEq [ \ell( 1, \cdot ), \ldots, \ell( K, \cdot ) ] \in \Real^K$ denotes the \emph{loss vector} of the predicted logits,
and each
$\ell( y, \predictor(x) )$ is the loss of predicting logits $\predictor(x) \in \Real^K$
when the true label is $y \in [ K ]$.
Typically, we set $\ell$ to be the softmax cross-entropy $\ell( y, \mathbf{f}( \vecx ) ) = -\log p_y( \vecx )$,
where $\mathbf{p}( \vecx ) \propto \exp( \mathbf{f}( \vecx ) )$ is the \emph{softmax} transformation of the logits.

Equation~\ref{eqn:empirical-risk-one-hot} guides the learner via one-hot targets $\mathbf{e}( y_n )$ for each input.
{Distillation}~\citep{Bucilua:2006,Hinton:2015} instead guides the learner via a target label {distribution} $\teacher{\mathbf{p}}( \vecx_n )$ provided by a \emph{teacher}, which are the softmax probabilities from a distinct model trained on the \emph{same} dataset.
In this context, the learned model is referred to as a \emph{student}, and the
training objective is
\begin{equation}
    \label{eqn:empirical-risk-distilled}
    R_{\rm dist}( \predictor ) \defEq 
    \frac{1}{N} \sum_{n \in [N]} \teacher{\mathbf{p}}( \vecx_n )^{\top} \ell( \predictor( \vecx_n ) ).
\end{equation}
One may also consider a weighted combination of $R_{\rm emp}$ and $R_{\rm dist}$
, but we focus on the above objective since we are interested in understanding each objective individually.

Compared to training on $R_{\rm emp}$, distillation often results in improved performance for the student~\citep{Hinton:2015}.
Typically, the teacher model is of higher capacity than the student model; the performance gains of the student may thus informally be attributed to the teacher transferring rich information about the problem to the student.
In such settings, distillation may be seen as a form of model compression.
Intriguingly, however, even when the teacher and student are of the \emph{same} capacity (a setting known as \emph{self-distillation}), one may see gains from distillation~\citep{Furlanello:2018,zhang19selfdistillation}. The questions we explore in this paper are motivated by the self-distillation setting; however, for a well-rounded analysis, we  empirically study both the self- and cross-architecture-distillation settings.

\section{A fine-grained look at student-teacher probability deviations}
\label{sec:margin}

To analyze student-teacher deviations,
\citet{stanton21does} 
 measured the disagreement and the KL divergence between the student and teacher probabilities, {\em in expectation over all points}.  They found these quantities to be non-trivially large, contrary to the premise of distillation. To
 probe into the exact nature of these deviations, our idea is to study the {\em per-sample} relationship between the teacher and student probabilities.

\begin{figure}[!t]
\centering
\subfloat[Test data]{\label{fig:main-test-logits}
\includegraphics[width=0.25\linewidth]{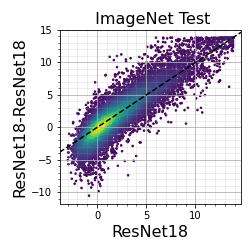}
\includegraphics[width=0.25\linewidth]{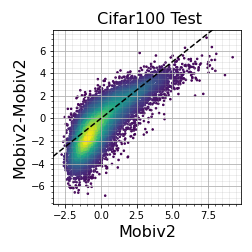}
}%
\subfloat[Cross-architecture]{\label{fig:logits-cross-arch-distillation}
\includegraphics[width=0.25\linewidth]{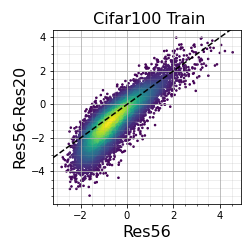}
\includegraphics[width=0.25\linewidth]{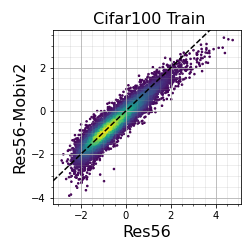}
}
    \caption{
    \textbf{Exaggeration of confidence in other settings.} There are settings where even on test data, and for cross-architecture distillation settings,  where the student exaggerates the teacher's confidence (here specifically on low-confidence points).
    }
    \label{fig:other-distillation}
\end{figure}

\textbf{Setup}.
Suppose we have teacher and distilled student models $\teacher{\mathbf{f}}, \student{\mathbf{f}} \colon \XCal \to \Real^K$ respectively.
We seek to analyze the deviations in the corresponding 
predicted probability vectors $\teacher{\mathbf{p}}(\vecx)$ and $\student{\mathbf{p}}(\vecx)$ 
for {each} $(\vecx, y)$ in the train and test set, 
rather than in the aggregated sense as in~\citet{stanton21does}. 
To visualize the deviations, we need a scalar summary of these vectors. An initial obvious candidate is the probabilities assigned to the ground truth class $\gt{y}$, namely $(\teacher{p}_{\gt{y}}(\vecx),\student{p}_{\gt{y}}(\vecx))$. However, the student does not have access to the ground truth class, and is only trying to mimic the teacher. Hence, it is more meaningful and valuable to focus on the \emph{teacher's predicted class}, which the student {can} infer i.e., the class $\teacher{y} \defEq \operatorname{argmax}_{y' \in [K]} \, { \teacher{p}_{y'}(\vecx) }$. Thus, we examine the teacher-student probabilities on this label, $(\teacher{p}_{\teacher{y}}(\vecx),\student{p}_{\teacher{y}}(\vecx))$. Purely for visual clarity, 
we further perform a monotonic logit transformation $\phi(u) =   \log\left[ {u} / ({1 - u}) \right]$ on these probabilities to produce real values in $( -\infty, +\infty )$. Thus, we compare 
$\phi( {\teacher{p}_{\teacher{y}}( \vecx )} )$
and $\phi( {\student{p}_{\teacher{y}}( \vecx )} )$ for each train and test sample $(\vecx, y)$. For brevity, we refer to these values as \textit{confidence values} for the rest of our discussion. It is worth noting that these confidence values possess another natural interpretation. For any probability vector $\vecp(\vecx)$ computed from the softmax of a logit vector $\vecf(\vecx)$, we can write $\phi( {{p}_{y}(\vecx)} )$ in terms of the logits as $f_y(\vecx) - \log \sum_{k \neq y} \exp(f_k(\vecx))$. This can be interpreted as a notion of multi-class margin for class $y$.

To examine point-wise deviations of these confidence values, we consider scatter plots of $\phi( {\teacher{p}_{\teacher{y}}(\vecx)} )$ ($X$-axis) vs.
 $\phi( {\student{p}_{\teacher{y}}(\vecx)} )$  ($Y$-axis). We report this on the training set for some representative self-distillation settings in Figures~\ref{fig:intro-underfitting}, cross-architecture distillation settings in Fig~\ref{fig:logits-cross-arch-distillation} and test set in Fig~\ref{fig:main-test-logits}. In all plots, the dashed line indicates the $X=Y$ line. 
All values are computed at the end of training. The tasks considered include image classification benchmarks, namely CIFAR10, CIFAR-100 \citep{Krizhevsky09learningmultiple}, Tiny-ImageNet \citep{le2015tiny}, ImageNet \citep{ILSVRC15} and text classification tasks from the GLUE benchmark (e.g., MNLI~\citep{Williams:2018}, AGNews~\citep{Zhang:2015}). 
See \S\ref{app:hyperparameters} for details on the experimental hyperparameters. The plots for all the settings (spanning more than $25$ settings in total)  are consolidated in \S\ref{app:logit-logit}.

\textbf{Distilled students exaggerate one-hot trained teacher's confidence}.
If the objective of distillation were minimized perfectly, the scatter plots would simply lie on $X=Y$ line. However,
we find that across \textit{all} our settings, the scatter shows stark deviations from $X=Y$. Importantly, these deviations reveal a characteristic pattern. In a vast majority of our settings, the confidence values of the student are an ``exaggerated'' version of the teacher's. This manifests in one of two ways. On the one hand, for points where the teacher attains low confidence, the student may attain {\em even lower} confidence --- this is true for all our image settings, either on training or on test data.  In the scatter plots, this can be inferred from the fact that for a large proportion of points where the $X$ axis value is small (low teacher confidence), it is also the case that $Y \leq X$ (even lower student confidence). As a second type of exaggeration, for points where the teacher attains high confidence, the student may attain {\em even higher} confidence --- this is true for a majority of our language settings, barring some cross-architecture ones.
To quantify these qualitative observations, in 
~\S\ref{app:quantification} we report the slope $m$ of the best linear fit $Y=mX+c$ over the low and high-confidence points and find that they correspond to $m > 1$ in the corresponding scenarios above. 

There are two reasons why these findings are particularly surprising. First, we see these deviations in both self-distillation settings (Fig~\ref{fig:intro-underfitting}) and cross-architecture settings (see Fig~\ref{fig:logits-cross-arch-distillation}). It is remarkable that this should occur in self-distillation given that the student has the capacity to match the teacher probabilities. Next, these deviations can occur on both training and test data (Fig~\ref{fig:main-test-logits}). Here, it is surprising that there is deviation on training data, where the student is explicitly trained to match teacher probabilities. However, we must note that there are a few exceptions where it only weakly appears in training data e.g., CIFAR-10, but in those cases it is prominent on test data.

We also conduct ablation studies in \S\ref{app:ablations} showing that this observation is robust to various hyperparameter changes (batch size, learning rate, length of training), and choices of visualization metrics. In \S\ref{app:gt-vs-teacher}, we explore further patterns underlying the student's underfit points. In \S\ref{app:logit-logit}, we  discuss the exceptions where these characteristic deviations fail to appear even if deviations are stark e.g., in cross-architecture language settings.

Thus, our point-wise visualization of deviations clarifies that the student's mismatch of the teacher's probabilities in \citet{stanton21does} stems from a systematic exaggeration of the teacher's confidence. How do we reconcile this deviation with the student outperforming the teacher in self-distillation? In the next section, we turn our attention to a different type of exaggeration exerted by distillation that will help us resolve this question.

\section{Distillation exaggerates implicit bias of GD}
\label{sec:eigenspace}

While the optimal solution to the KD loss (Eq~\ref{eqn:empirical-risk-distilled}) is for the student to replicate the teacher's probabilities, in practice we minimize this loss using gradient descent (GD). Thus,
to understand why the student exaggerates the teacher's confidence in practice, it may be key to understand how GD interacts with the distillation loss. Indeed, in this section we analyze GD and formally demonstrate that, for linear regression,
distillation exaggerates a pre-existing implicit bias in GD: the tendency to converge faster along the top eigendirections of the data. We will also empirically verify that our insight generalizes to neural networks with cross-entropy loss. In a later section, we will connect this exaggeration of bias back to the exaggeration of confidence observed in ~\S\ref{sec:margin}.

Concretely, we analyze gradient flow in a linear regression setting with early-stopping. Note that linear models have been used as a way to understand distillation even in prior work \cite{Phuong:2019,mobahi20self}; and early-stopping
is  a typical design choice in distillation practice ~\citep{Dong:2019,Cho:2019,Ji:2020}.
Consider an $n \times p$ dataset $\vecX$ (where $n$ is the number of samples, $p$ the number of parameters) with target labels $\vecy$. Assume that the Gram matrix $\vecX \vecX^\top$ is invertible. This setting includes overparameterized scenarios ($p>n$) such as when $\vecX$ corresponds to the linearized (NTK) features of neural networks ~\citep{jacot19ntk,lee19widenn}.
Then, 
a standard calculation spells out
the weights learned at time $t$ under GD on the loss $(1/2)\cdot \| \vecX \vecbeta -  \vecy\|^2$ as:
\begin{align}
    \vecbeta(t) &= \vecX^\top (\vecX \vecX^\top)^{-1} \vecA(t) \vecy \, \\
    \text{ where }
    \vecA(t) &:= \vecI -e^{-t \vecX \vecX^\top}.
\end{align}

Thus, the weights depend crucially on a time-dependent matrix $\vecA$. Intuitively, $\vecA$ determines the convergence rate independently along each eigendirection:  at any time $t$, it skews the weight assigned to an eigendirection of eigenvalue $\lambda$ by the  value $(1-e^{-\lambda t})$. 
As $t\to \infty$ this factor increases all the way to $1$ for all directions, thus indicating full convergence. But for any  finite time $t$, the topmost direction would have a larger factor than the rest, implying a bias towards that direction.  Our argument is that,  {\em distillation further exaggerates this implicit bias}.
To see why, consider that the teacher is trained to time $\teacher{T}$.  
Through some calculation, the student's weights can be similarly expressed in closed-form, but with $\vecA$ replaced by the product of two matrices:
\begin{align}
   \kd{\vecbeta}(\tilde{t}) &= \vecX^\top (\vecX \vecX^\top)^{-1} \kd{\vecA}(\tilde{t})  \vecy \, \\
   \text{ where } \kd{\vecA}(\tilde{t}) &:= {\vecA}(t)\vecA(\teacher{T}).
\end{align}

One can then argue that the matrix $\tilde{\vecA}$ corresponding to the student is more skewed towards the top eigenvectors than the teacher:
\begin{theorem} \textbf{(informal; see ~\S\ref{app:eigenspace} for full version and proof)}
\label{thm:sparsification} Let $\beta_k(t)$ and $\kd{\beta}_k(t)$ respectively denote the component of the teacher and student weights along the $k$'th eigenvector of the Gram matrix $\vecX\vecX^\top$, at any time $t$. Let $k_1 < k_2$ be two indices for which the eigenvalues satisfy $\lambda_{k_1} > \lambda_{k_2}$. Consider any time instants $t$ and $\kd{t}$ at which both the teacher and the student have converged equally well along the top direction $k_1$,  in that $\beta_{k_1}(t) = \kd{\beta}_{k_1}(\kd{t})$. Then along the bottom direction, the student has a strictly smaller component than the teacher, as in,
\begin{equation}
    \left|\frac{\kd{\beta}_{k_2}(\kd{t})} {\beta_{k_2}(t)}\right| < 1.
\end{equation}
\end{theorem}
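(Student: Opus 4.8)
The plan is to diagonalize everything in the eigenbasis of the Gram matrix, reduce the matrix claim to a scalar inequality between per-eigendirection convergence factors, and then settle that inequality by counting the real zeros of an exponential sum.

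First I would write the eigendecomposition $\vecX\vecX^\top = \sum_k \lambda_k \vecu_k \vecu_k^\top$ and expand $\vecy = \sum_k c_k \vecu_k$ with $c_k = \vecu_k^\top \vecy$. Since $\vecA(\cdot)$ is a function of $\vecX\vecX^\top$, it acts diagonally in the $\{\vecu_k\}$ basis with eigenvalue $1 - e^{-\lambda_k \cdot}$, so the displayed closed forms give $\beta_k(t) = \tfrac{c_k}{\sqrt{\lambda_k}}\,(1-e^{-\lambda_k t})$ and $\kd{\beta}_k(\kd{t}) = \tfrac{c_k}{\sqrt{\lambda_k}}\,(1-e^{-\lambda_k \kd{t}})(1-e^{-\lambda_k \teacher{T}})$. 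This is the standard calculation underlying the formulas and fixes the meaning of the ``$k$'th component''; crucially, the prefactor $\tfrac{c_k}{\sqrt{\lambda_k}}$ is the same for teacher and student at a fixed $k$, so (assuming $c_{k_1}, c_{k_2}\neq 0$) it cancels in every ratio. I then only need to track the scalar factors $f_T(\lambda) := 1-e^{-\lambda t}$ and $f_S(\lambda) := (1-e^{-\lambda \kd{t}})(1-e^{-\lambda \teacher{T}})$.

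Next I would recast both hypothesis and goal in these terms. The equal-convergence hypothesis $\beta_{k_1}(t) = \kd{\beta}_{k_1}(\kd{t})$ becomes $f_T(\lambda_{k_1}) = f_S(\lambda_{k_1})$; since $f_S(\lambda_{k_1})$ is a product of two factors each strictly below $1-e^{-\lambda_{k_1}\teacher{T}}$ and below $1-e^{-\lambda_{k_1}\kd{t}}$, this equality immediately forces $t < \kd{t}$ and $t < \teacher{T}$. The goal $|\kd{\beta}_{k_2}(\kd{t})/\beta_{k_2}(t)| < 1$ becomes $f_S(\lambda_{k_2}) < f_T(\lambda_{k_2})$, because all these factors are positive. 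So, defining $D(\lambda) := f_T(\lambda) - f_S(\lambda) = e^{-\lambda \kd{t}} + e^{-\lambda \teacher{T}} - e^{-\lambda t} - e^{-\lambda(\kd{t} + \teacher{T})}$, it suffices to show $D(\lambda_{k_2}) > 0$, knowing $D(\lambda_{k_1}) = 0$ and $\lambda_{k_2} < \lambda_{k_1}$.

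The step I expect to be the real obstacle is that neither $D$ nor the ratio $f_S/f_T$ is monotone in $\lambda$ (the ratio is unimodal), so I cannot simply compare the values at $\lambda_{k_2}$ and $\lambda_{k_1}$. I would circumvent this by counting zeros. Observe first that $D(0) = 0$ exactly. Moreover $D$ is a four-term exponential sum whose terms, ordered by exponent (using $t < \teacher{T}$, $\kd{t} < \teacher{T}+\kd{t}$), carry the sign pattern $(-,+,+,-)$, which has exactly two sign changes; by the generalized Descartes/Pólya rule for exponential sums, $D$ then has at most two real roots counting multiplicity. Since $0$ and $\lambda_{k_1}$ are already two distinct roots, these are all of them, so $D$ has no root in the open interval $(0,\lambda_{k_1})$. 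A first-order expansion at the origin gives $D(\lambda) = t\,\lambda + O(\lambda^2) > 0$ for small $\lambda>0$, so by continuity $D > 0$ throughout $(0,\lambda_{k_1})$; as $\lambda_{k_2} \in (0,\lambda_{k_1})$ this yields $D(\lambda_{k_2}) > 0$, the desired strict inequality. Finally I would note that the degenerate case $\teacher{T} = \kd{t}$ only merges the two middle terms into a single coefficient $+2$, leaving the sign-change count at two, so the argument goes through unchanged.
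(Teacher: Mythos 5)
Your proof is correct, and it settles the key inequality by a genuinely different route than the paper. Both arguments share the same opening reduction: diagonalize in the eigenbasis of $\vecX\vecX^\top$, note that the prefactor $c_k/\sqrt{\lambda_k}$ is common to teacher and student, and reduce the claim to comparing the scalar convergence factors $1-e^{-\lambda t}$ and $(1-e^{-\lambda \kd{t}})(1-e^{-\lambda \teacher{T}})$ (the paper does this via the eigenvalues $\alpha_k(t)$ of $\vecA(t)$ and $\kd{\alpha}_k(\kd{t})$ of $\kd{\vecA}(\kd{t})$). The divergence is in what happens next. The paper freezes $\lambda_{k_1}$, substitutes $a = e^{-\lambda_1 \kd{t}}$, $b = e^{-\lambda_1 \teacher{T}}$, $\kappa = \lambda_2/\lambda_1 \in (0,1)$, eliminates $t$ through the constraint (so that $e^{-\lambda_1 t} = a+b-ab$), and proves the resulting algebraic inequality $a^\kappa + b^\kappa > (ab)^\kappa + (a+b-ab)^\kappa$ by analyzing $f(u) = u^\kappa + (a+b-u)^\kappa$: with a WLOG $a \ge b$, the inequality follows from $f$ being maximized at $u=(a+b)/2$ and decreasing on $\left[\tfrac{a+b}{2}, a+b\right]$. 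You instead hold the times fixed, treat $D(\lambda) = e^{-\lambda \kd{t}} + e^{-\lambda \teacher{T}} - e^{-\lambda t} - e^{-\lambda(\kd{t}+\teacher{T})}$ as a function of $\lambda$, and count zeros: the orderings $t < \kd{t}$ and $t < \teacher{T}$, which you correctly extract from the equal-convergence hypothesis (the paper never needs them explicitly), give the sign pattern $(-,+,+,-)$, so the Laguerre--Descartes rule for exponential sums caps the real zeros at two; since $D(0)=0$ and $D(\lambda_{k_1})=0$ exhaust them and $D(\lambda) = t\lambda + O(\lambda^2) > 0$ near $0^{+}$ (indeed $D'(0)=t$), continuity forces $D>0$ on all of $(0,\lambda_{k_1})$, in particular at $\lambda_{k_2}$. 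What each approach buys: the paper's is entirely elementary calculus but hinges on a somewhat magical change of variables, a case analysis via WLOG, and the unimodality of $f$; yours is conceptually cleaner once the zero-counting rule is granted, handles the degenerate case $\kd{t}=\teacher{T}$ transparently, but leans on a classical theorem that you should either cite precisely (P\'olya--Szeg\H{o}, Part V) or prove by the standard induction (multiply by $e^{t\lambda}$ and apply Rolle). A small merit of your write-up worth keeping: you state explicitly the assumption $c_{k_1}, c_{k_2} \neq 0$, i.e., that $\vecy$ has nonzero overlap with both eigendirections, which the paper's inference from $\beta_{k_1}(t) = \kd{\beta}_{k_1}(\kd{t})$ to $\alpha_{k_1}(t) = \kd{\alpha}_{k_1}(\kd{t})$ uses silently, and without which the hypothesis is vacuous and the conclusion's ratio can be undefined or false.
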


The result says that the student relies less on the bottom eigendirections than the teacher, if we compare them at any instant when they have both converged equally well along the top eigendirections. In other words,
while the teacher already has an implicit tendency to converge faster along the top eigendirections, the student has an even stronger tendency to do so. In the next section, we demonstrate that this insight generalizes to more practical non-linear settings.

\textbf{Connection to prior theory.} As discussed earlier, we build on \citet{mobahi20self} who prove that distillation exaggerates the {\em explicit regularization} applied in a {\em non-GD} setting. However, it was an open question as to whether their insight is relevant to GD-trained models used in practice, which we answer in the affirmative. 
We further directly establish its relevance to practice through an empirical demonstration of our insights in more general neural network settings in ~\S\ref{sec:verify-exaggeration}.  Also note that linear models were studied by \citet{Phuong:2019} too, but they do not show how the student learns any different weights than the teacher, let alone \textit{better} weights than the teacher.

Our result also brings out an important clarification regarding early-stopping. \citet{mobahi20self} argue that early-stopping and distillation have opposite regularization effects, wherein the former has a densifying effect while the latter a sparsifying effect. However, this holds only in their non-GD setting. In GD settings, we argue that distillation amplifies the effect of early stopping, rather than oppose it.

\subsection{Empirical verification of exaggerated bias in more general settings} 
\label{sec:verify-exaggeration}

While our theory applies to linear regression  with gradient \textit{flow}, we now verify our insights in more general settings. In short, we consider settings with (a) finite learning rate instead of infinitesimal, (b) cross-entropy instead of squared error loss, (c) an MLP (in ~\S\ref{app:eigenspace-verify}, we consider a CNN), (d) trained on a non-synthetic dataset (MNIST). 

To examine how the weights evolve in the eigenspace, we  project the first layer weights $\vecW$ onto each eigenvector of the data $\vecv$ to compute the component $\| \vecW^\top \vecv\|_2$. We then sample two eigenvectors $\vecv_i, \vecv_j$ at random (with $i < j$), and plot how the quantities $(\| \vecW^\top \vecv_i\|_2, \| \vecW^\top \vecv_j\|_2)$ evolve over time. This  provides a  2-D glimpse into a complex high-dimensional trajectory of the model. 

We provide two such random 2-D slices of the trajectory for our MLP in Fig~\ref{fig:intro-ev}, and many more such random slices in ~\S\ref{app:eigenspace-verify} (not cherry-picked). Across almost all of these slices, we find a consistent pattern emerge.  First, as is expected, the one-hot trained teacher shows an implicit bias towards converging to its final value faster along the top direction, which is plotted along the $X$ axis. But crucially, across almost all these slices, the distilled student presents a more exaggerated version of this bias. This 
leads it to traverse {\em a different} part of the parameter space with greater reliance on the top directions. Notably, we also find underfitting of low-confidence points in this setting, visualized in \S\ref{app:eigenspace-verify}. 

\section{Reconciling student-teacher deviations and generalization}
\label{sec:reconcile}

\begin{wrapfigure}{R}{0.4\linewidth}
\includegraphics[width=\linewidth]{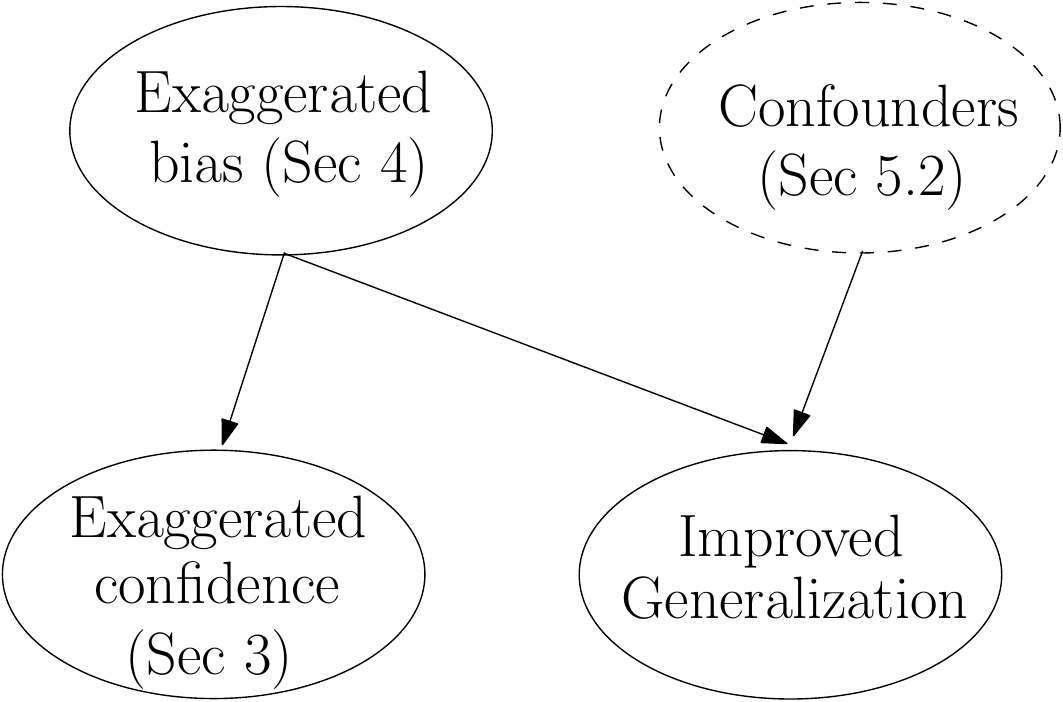}
    \caption{\textbf{Reconciling the paradox:} Distillation exaggerates the implicit bias of GD, which can both exaggerate confidence levels (thus causing deviations in probability) and help generalization. Note that the improved generalization is however conditioned on other confounding factors such as the teacher's training accuracy, as we discuss later in \S~\ref{sec:confounding}.
    }
    \label{fig:dag}
\end{wrapfigure}

So far, we have established two forms of exaggeration under distillation, one of confidence (\S\ref{sec:margin}), and one of convergence in the eigenspace (\S\ref{sec:eigenspace}). 
Next, via a series of experiments, we tie these together to resolve our core paradox: that the  student-teacher deviations somehow co-occur with the improved performance of the student. 
First, we design an experiment demonstrating how the exaggerated bias from \S\ref{sec:eigenspace} translates into exaggerated confidence levels seen in the probability deviation plots of \S\ref{sec:margin}. Then, via some controlled experiments, we argue when and how this bias can also simultaneously lead to improved student performance. Thus, our overall argument is that both the deviations and the improved performance {\em can} co-occur thanks to a common factor, the exaggeration in bias. We summarize this narrative as a graph in Fig~\ref{fig:dag}.

\subsection{Connecting exaggerated bias to deviations and generalization} 
\label{sec:connecting-the-three}

\textbf{Exaggerated implicit bias results in exaggerated confidence.} 
We frame our argument in an empirical setting where a portion of the CIFAR100 one-hot labels are mislabeled. In this context, it is well-known that the implicit bias of early-stopped GD fits the noisy subset of the data more slowly than the clean data; this is because noisy labels correspond to bottom eigendirections \citep{li20gd,Dong:2019,arpit17closer,kalimeris19sgd}. Even prior works on distillation \citep{Dong:2019,Ren:2022} have argued that when observed labels are noisy, the teacher's implicit bias helps denoise the labels. 
As a first step of our argument, we corroborate this in Fig~\ref{fig:noisy}, where we 
indeed find that mislabeled points have the low teacher confidence (small $X$ axis values).

Going beyond this prior understanding, we make a crucial second step in our argument: our
theory would predict that the student must rely {\em even less} on the lower eigendirections than the teacher already does. This means an even poorer fit of the mislabeled datapoints than the teacher.
Indeed, in Fig~\ref{fig:noisy}, we find that of all the points that the teacher has low confidence on (i.e., points with small $X$ values)  --- which  includes some clean data as well --- the student underfits all the \textit{mis}labeled data  (i.e., $Y<X$ in the plots for those points). This confirms our hypothesis that the exaggeration of the implicit bias in the eigenspace translates to an exaggeration of the confidence levels, and thus a deviation in probabilities. 

\textbf{Exaggerated implicit bias can result in student outperforming the teacher.} The same experiment above also gives us a handle on understanding how distillation benefits generalization. Specifically, 
in Table~\ref{tbl:model-performance}, we find that in this setting, the self-distilled ResNet56 model witnesses a $3\%$ gain over an identical teacher.  Prior works \citep{Dong:2019,Ren:2022} argue, this is because the implicit bias of the teacher
results in teacher probabilities that are partially denoised compared to the one hot labels.
This alone however, cannot explain why the student --- which is supposedly trying to replicate the teacher's probabilities --- can \textit{outperform} the teacher. Our solution to this is to recognize that the student doesn't replicate the teacher's bias towards top eigenvectors, but rather exaggerates it. This provides an enhanced denoising  which is crucial to outperforming the teacher. 
The exaggerated bias not only helps generalization, but as discussed in the previous paragraph, it also induces deviations in probability. This provides a reconciliation between the two seemingly inconsistent behaviors.

\begin{figure}[t!]
\centering
\centering
\subfloat[Confidence exaggeration]{\label{fig:noisy-kd}
\begin{tabular}[b]{@{}c}
\includegraphics[width=0.27\linewidth]{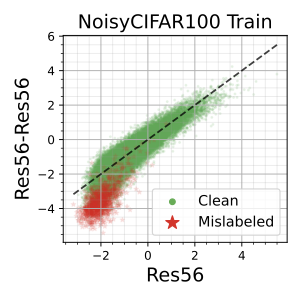} \\
\includegraphics[width=0.27\linewidth]{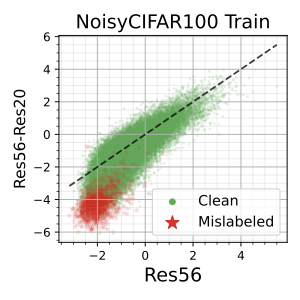}
\end{tabular}%
}
\subfloat[Effect of teacher interpolation level on CIFAR-100 self-distillation]{\label{fig:interp-cifar100}
\includegraphics[width=0.35\linewidth]{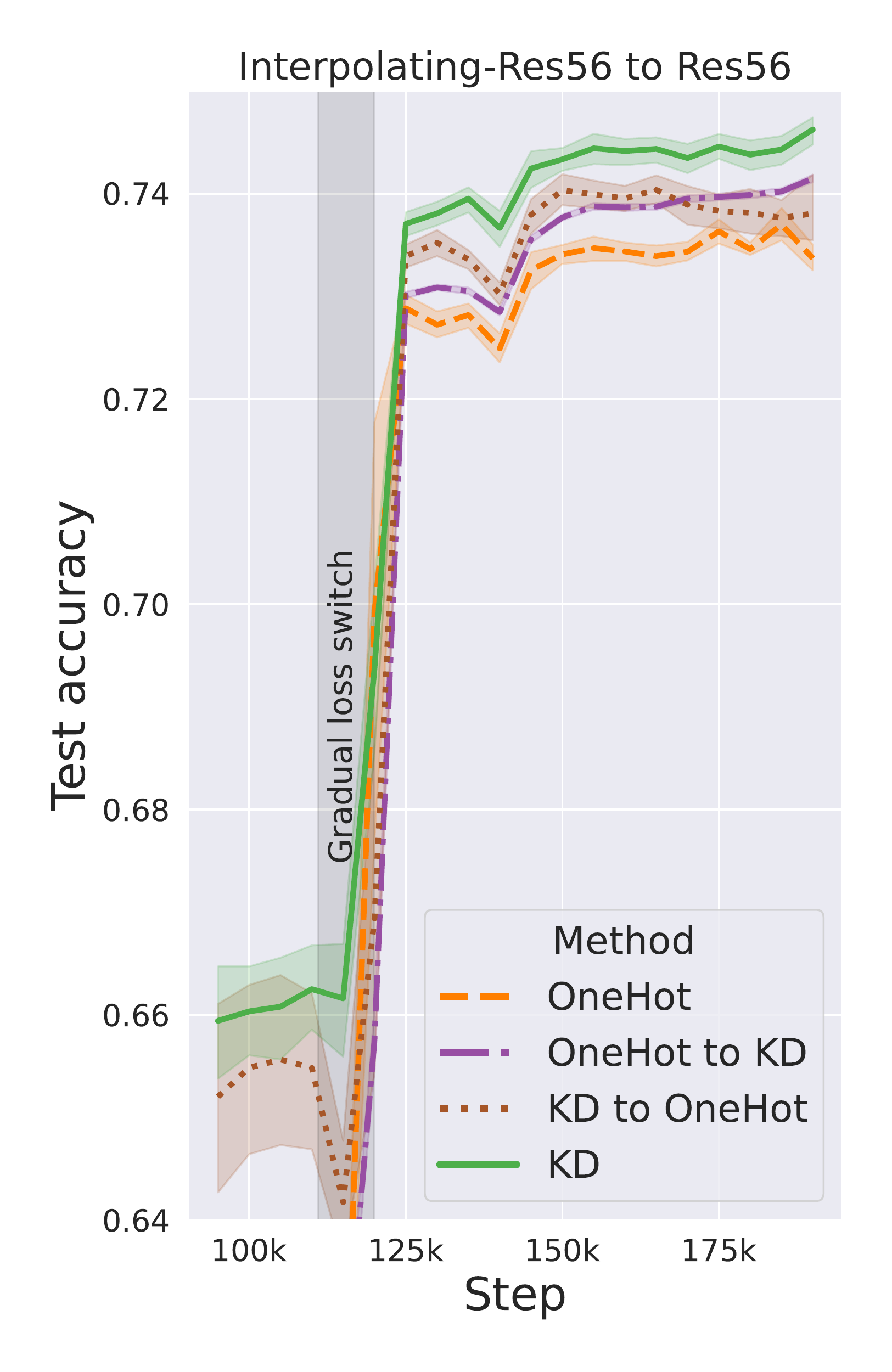}
\includegraphics[width=0.35\linewidth]{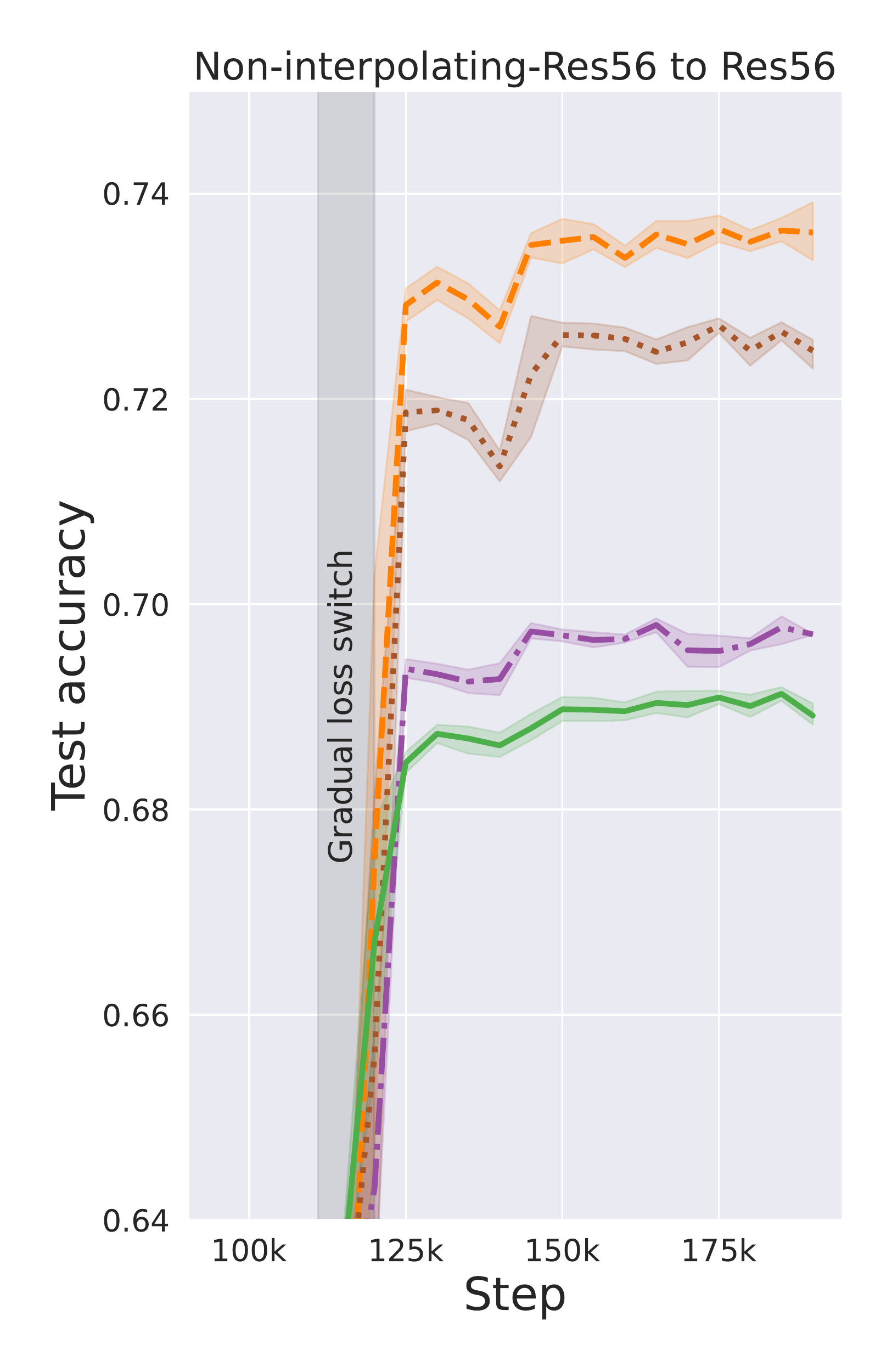}
}
    \caption{\textbf{Left: Exaggeration of confidence under explicit label noise:} While the teacher already achieves low confidence on points with wrong one-hot labels, the student achieves even lower confidence on these points, in both self- (top) and cross-architecture (bottom) distillation. 
    \textbf{Right: Effect of teacher's interpolation level in CIFAR-100:} For an interpolating teacher (left), switching to one-hot loss in the middle of training \textit{hurts} generalization, while for a non-interpolating teacher, the switch to one-hot is helpful.
    }
    \label{fig:noisy}
\end{figure}

\subsection{When distillation can hurt generalization}
\label{sec:confounding}

We emphasize an important nuance to this discussion: regularization can also {hurt} generalization, if other confounding factors (e.g., dataset complexity) are unfavorable. Below, we discuss a key such confounding factor relevant to our experiments.

\textbf{Teacher's top-1 train accuracy as a confounding factor.} A well-known example of where distillation hurts generalization is that of ImageNet, as demonstrated in Fig. 3 of \citet{Cho:2019}. We corroborate this in our setup in Table~\ref{tbl:model-performance}. At the same time, in our experiments on ImageNet, we find that distillation {does} exaggerate the confidence levels (Fig~\ref{fig:in-logit-plots}), implying that regularization {is} at play.  A possible explanation for why the student suffers here could be that it has inadequate capacity to match the rich non-target probabilities of the teacher \citep{Cho:2019,Jafari:2021}.  However, this cannot justify why even {self}-distillation is detrimental in ImageNet (e.g., \citep[Table 3]{Cho:2019} for ResNet18 self-distillation). 

We advocate for an alternative hypothesis, in line with ~\citep{iliopoulos22weighted,Zhou:2020b}:
{\em distillation can hurt the student when the teacher does not achieve sufficient top-1 accuracy on the training data.} e.g., ResNet18 has 78\% ImageNet train accuracy. This hypothesis may appear to contradict the observation 
from \citep{Cho:2019} that the student's accuracy is hurt by much larger teachers, which have better training accuracy. However, in the presence of a larger teacher, there are two confounding factors: the teacher's train accuracy and the complexity of the teacher's non-target probabilities. This makes it hard to disentangle the individual effect of the two factors, which have opposite effects on the student's performance.

To isolate the effect of the teacher's top-1 training accuracy, we focus on the self-distillation setting. In this setting, we provide three arguments supporting the hypothesis that the teacher's imperfect training accuracy can hurt the student.

\textbf{Evidence 1: A controlled experiment with an interpolating and a non-interpolating teacher.} 
First, we train two ResNet56 teachers on CIFAR100, one which interpolates on the whole dataset (i.e., $100\%$ top-1 accuracy), and another which interpolates on only half the dataset.  Upon distilling a ResNet56 student on the \textit{whole} dataset in both settings, we find in Fig~\ref{fig:interp-cifar100} that distilling from the interpolating teacher helps, while distilling from the non-interpolating teacher hurts. This provides direct evidence for our argument.

\textbf{Evidence 2: Switching to one-hot loss \textit{helps} under a \textit{non}-interpolating teacher.} For a non-interpolating teacher, 
 distillation must provide rich top-K information while one-hot must provide precise top-1 information. Thus, our hypothesis would predict that for a non-interpolating teacher, there must be a way to optimally train the student with both distillation and one-hot losses. Indeed \cite{Cho:2019,Jafari:2021,Zhou:2020b} already demonstrate that making some sort of soft switch from distillation to one-hot loss over the course of training, improves generalization for ImageNet. Although  \cite{Cho:2019} motivate this from their capacity mismatch hypothesis, they report that this technique works for self-distillation on ImageNet as well (e.g., \citep[Table 3]{Cho:2019}), thus validating our hypothesis.  We additionally verify these findings indeed hold in some of our self-distillation settings, namely the (controlled) non-interpolating CIFAR100 teacher (Fig~\ref{fig:interp-cifar100}), and a (naturally) non-interpolating TinyImagenet teacher (\S\ref{app:loss-switching}), where the capacity mismatch argument does not apply. 

\textbf{Evidence 3: Switching to one-hot loss \textit{hurts} under an interpolating teacher.} Our hypothesis would predict that a switch from distillation to one-hot loss, would not be helpful if the teacher already has perfect top-1 accuracy. We verify this with a interpolating CIFAR100 teacher (Fig~\ref{fig:interp-cifar100}, Fig~\ref{fig:loss-switching-tinyin}). Presumably, one-hot labels provide strictly less information in this case, and causes the network to overfit to the less informative signals. This further reinforces the hypothesis that the teacher's top-1 training accuracy is an important factor in determining whether the exaggerated bias effect of distillation helps generalization.

 Framed within the terms of our eigenspace analysis, when the teacher has imperfect top-1 training accuracy, it may mean that the teacher has not sufficiently converged along some critical (say, second or third) eigendirections of the data. The bias exaggerated by distillation would only further curtail the convergence along these directions, hurting generalization.
 
  In summary, this discussion leads us to a more nuanced resolution to the apparent paradox of student-teacher deviations co-occuring with improved generalization. On the one hand, distillation causes an exaggeration of the confidence levels, which causes a deviation between student and teacher probabilities. At the same time, the same effect can aid the student's generalization, {\em provided other confounding factors are conducive for it.} 

\section{Relation to Existing Work}

\textbf{Distillation as a probability matching process.} 
Distillation has been touted to be a process that benefits from matching the teacher's probabilities~\citep{Hinton:2015}. Indeed, many distillation algorithms have been designed in a way to more aggressively match the student and teacher functions~\citep{Czarnecki:2017,Beyer:2022}. Theoretical analyses too rely on explaining the benefits of distillation based on a student that obediently matches the teacher's probabilities \citep{menon21statistical}. But, building on \citet{stanton21does}, our work demonstrates why we may {desire} that the student deviate from the teacher, in certain systematic ways.

\textbf{Theories of distillation.} 
A long-standing intuition for why distillation helps is that the teacher's probabilities contain ``dark knowledge'' about class similarities~\citep{Hinton:2015,Muller:2019},
Several works~\citep{menon21statistical,Dao:2021,Ren:2022,zhou21rethinking} have formalized these similarities via inherently noisy class memberships. However, some works~\citep{Furlanello:2018,Yuan:2020,Tang:2020} have argued that 
this
hypothesis cannot be the sole explanation, because distillation can help even if the student is only taught information about the target probabilities (e.g., by smoothing out all non-target probabilities).

This has resulted in various alternative hypotheses. Some have proposed
{faster convergence}~\citep{Phuong:2019,Rahbar:2020,Ji:2020} which only explains why the student would converge fast to the teacher, but not why it may deviate from and supersede a one-hot teacher. 
 Another line of work casts distillation as a regularizer, 
 either in the sense of \citet{mobahi20self} or in the sense of {instance-specific label smoothing}~\citep{Zhang:2020,Yuan:2020,Tang:2020}.
Another hypothesis is that distillation induces better {feature learning} or conditioning ~\citep{zhu20towards,jha20demystification}, likely in the early parts of training. This effect however is not general enough to appear in convex linear settings, where distillation {can} help. Furthermore, it is unclear if this is relevant in the CIFAR100 setting, where we find that switching to KD much later during training is sufficient to see gains in distillation (\S\ref{app:loss-switching}). Orthogonally, \citep{pham2022revisiting} suggest that distillation results in flatter minima, which may lead to better generalization. Finally, we also refer the reader to~\citep{Lopez-Paz:2016,kaplun22kd,wang2022what} who theoretically study distillation in orthogonal settings.

\textbf{Early-stopping and knowledge distillation.}  Early-stopping has received much attention in the context of distillation~\citep{liu20earlylearning,Ren:2022,Dong:2019,Cho:2019,Ji:2020}. We build on~\citet{Dong:2019}, who argue how early-stopping a GD-trained teacher can automatically denoise the labels due to regularization in the eigenspace. However, these works do not provide an argument for why distillation can outperform the teacher. %

\textbf{Empirical studies of distillation.}  Our study crucially builds on observations from \citep{stanton21does,Lukasik:2021} demonstrating student-teacher deviations in an aggregated sense than in a sample-wise sense. 
Other studies~\citep{Abnar:2020,ojha22what} investigate how the student is {\em similar} to the teacher in terms of out-of-distribution behavior, calibration, and so on.
\citet{deng2021can} show how a smaller student can outperform the teacher when allowed to match the teacher on more data, which is orthogonal to our setting.

\section{Discussion and Future Work}

 Here, we highlight the key insights from our work valuable for future research in distillation practice:

\begin{enumerate}[topsep=0pt,itemsep=0pt,leftmargin=16pt]
    \item \textit{Not matching the teacher probabilities exactly can be a good thing, if done carefully}. Perhaps encouraging underfitting of teachers' low-confidence points can further exaggerate the benefits of the regularization effect. 
    \item \textit{It may help to switch to one-hot loss in the middle of training if the teacher does not sufficiently interpolate the ground truth labels.}
\end{enumerate}

We also highlight a few theoretical directions for future work.
First, it would be valuable to extend our eigenspace view to multi-layered models where the eigenspace regularization effect may ``compound'' across layers. Furthermore, one could explore ways to exaggerate the regularization effect in our simple linear setting and then extend the idea to a more general distillation approach. Finally, it would be practically useful to extend these insights to other modes of distillation, such as \emph{semi-supervised} distillation~\citep{Cotter:2021}, non-classification settings such as ranking models~\citep{hofstatter20neural}, or intermediate-layer-based distillation~\citep{Romero:2015}.

We highlight the limitations of our study in Appendix~\ref{sec:limitations}.

\bibliography{references}
\bibliographystyle{plainnat}

\clearpage
\appendix

\addcontentsline{toc}{section}{Appendix} %
\part{Appendix} %
\parttoc %

\clearpage

\section{Limitations}
\label{sec:limitations}

We highlight a few key limitations to our results that may be relevant for future work to look at:

\begin{enumerate}

    \item Our visualizations focus on student-teacher deviations in the top-1 class of the teacher. While this already reveals a systematic pattern across various datasets, this does not capture richer deviations that may occur in the teacher's lower-ranked classes. Examining those would shed light on the ``dark knowledge'' hidden in the non-target classes.

    \item Although we demonstrate the exaggerated bias of Theorem~\ref{thm:sparsification} in MLPs (Sec~\ref{app:eigenspace-verify}, Fig~\ref{fig:mnist-mlp-ev}) and CNNs (Sec~\ref{app:eigenspace-verify}, Fig~\ref{fig:cifar-cnn-ev}), we do not formalize  any higher-order effects that may emerge in such multi-layer models. It is possible that the same eigenspace regularization effect propagates down the layers of a network. We show some preliminary evidence in Sec~\ref{app:intermediate}.  

    \item   We do not {exhaustively} characterize {when} the underlying exaggerated bias of distillation is \textit{(in)sufficient} for improved generalization. One example where this relationship is arguably sufficient is in the case of noise in the one-hot labels (Fig~\ref{fig:noisy}). One example where this is insufficient is when the teacher does not fit the one-hot labels perfectly (Fig~\ref{fig:interp-cifar100}). A more exhaustive characterization would be practically helpful as it may help us predict when it is worth performing distillation.
    
    \item The effect of the teacher's top-1 accuracy (Sec~\ref{sec:confounding}) has a further confounding factor which we do not address: the ``complexity'' of the dataset. For CIFAR-100, the teacher's labels are more helpful than the one-hot labels, even for a mildly-non-interpolating teacher with $4\%$ top-1 error on training data; it is only when there is sufficient lack of interpolation that one-hot labels complement the teacher's labels. For the relatively more complex Tiny-Imagenet, the one-hot labels complement teacher's soft labels even when the teacher has $2\%$ top-1 error (Fig~\ref{fig:loss-switching-tinyin}). 
    
\end{enumerate}
\section{Proof of Theorem}
\label{app:eigenspace}

Below, we provide the proof for Theorem~\ref{thm:sparsification} that shows that the distilled student converges faster along the top eigendirections than the teacher.

\begin{theorem}
\label{thm:sparsification-formal}
Let $\vecX \in \mathbb{R}^{n \times p}$ and $\vecy \in \mathbb{R}^{n}$ be the $p$-dimenionsional inputs and labels of a dataset of $n$ examples, where $p > n$. Assume the Gram matrix $\vecX\vecX^\top$ is invertible, with $n$ eigenvectors $\vecv_1, \vecv_2 ,\hdots, \vecv_n$ in $\mathbb{R}^p$.  Let $\vecbeta(t) \in \mathbb{R}^p$ denote a teacher model at time $t$, when trained with gradient flow to minimize $\frac{1}{2} \|\vecX \vecbeta(t) - \vecy\|^2$, starting from $\vecbeta(0) = \vec{0}$. Let $\kd{\vecbeta}(\kd{t}) \in \mathbb{R}^p$ be a student model at time $\kd{t}$, when trained with gradient flow to minimize $\frac{1}{2} \|\vecX \vecbeta(t) - \teacher{\vecy}\|^2$, starting from $\kd{\vecbeta}(0) = \vec{0}$; here $\teacher{\vecy} = \vecX \vecbeta(\teacher{T})$ is the output of a teacher trained to time $\teacher{T} > 0$. Let $\beta_k(\cdot)$ and $\kd{\beta}_k(\cdot)$ respectively denote the component of the teacher and student weights along the $k$'th eigenvector of the Gram matrix $\vecX\vecX^\top$ as:

\begin{align}
    \beta_k(t) = \vecbeta_k(t) \cdot \vecv_k,
\end{align}
and
\begin{align}
    \kd{\beta}_k(\kd{t}) = \kd{\vecbeta}_k(\kd{t}) \cdot \vecv_k.
\end{align}
Let $k_1 < k_2$ be two indices for which the eigenvalues satisfy $\lambda_{k_1} > \lambda_{k_2}$, if any exist. Consider any time instants $t > 0$ and $\kd{t} > 0$ at which both the teacher and the student have converged equally well along the top direction $\vecv_{k_1}$,  in that 

\begin{align}
    \beta_{k_1}(t) = \kd{\beta}_{k_1}(\kd{t}).
\end{align}

 Then along the bottom direction, the student has a strictly smaller component than the teacher, as in,
\begin{equation}
    \left|\frac{\kd{\beta}_{k_2}(\kd{t})} {\beta_{k_2}(t)}\right| < 1.
\end{equation}
\end{theorem}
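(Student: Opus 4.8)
The plan is to diagonalize everything through the singular value decomposition of $\vecX$ and reduce the matrix statement to a scalar inequality about per-eigendirection convergence factors. Writing $\vecX = \sum_{i=1}^n \sigma_i \vecu_i \vecv_i^\top$ with $\vecu_i \in \mathbb{R}^n$ the orthonormal eigenvectors of $\vecX\vecX^\top$ at eigenvalue $\lambda_i = \sigma_i^2$ and $\vecv_i \in \mathbb{R}^p$ the corresponding right singular vectors, I would substitute the given closed forms and use orthonormality of the $\vecu_i$ to collapse the products. Since $\vecA(s) = \vecI - e^{-s\vecX\vecX^\top} = \sum_i (1-e^{-\lambda_i s})\vecu_i\vecu_i^\top$, this yields
\begin{equation}
  \beta_k(t) = \frac{1 - e^{-\lambda_k t}}{\sqrt{\lambda_k}}\,(\vecu_k^\top \vecy), \qquad
  \kd{\beta}_k(\kd{t}) = \frac{(1 - e^{-\lambda_k \kd{t}})(1 - e^{-\lambda_k \teacher{T}})}{\sqrt{\lambda_k}}\,(\vecu_k^\top \vecy),
\end{equation}
where the extra student factor arises from $\kd{\vecA}(\kd{t}) = \vecA(\kd{t})\vecA(\teacher{T})$, equivalently from the identity $\teacher{\vecy} = \vecA(\teacher{T})\vecy$. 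The common scalar $(\vecu_k^\top\vecy)/\sqrt{\lambda_k}$ cancels in every ratio of interest, so the whole statement depends only on the convergence factors $a_\lambda(s) := 1 - e^{-\lambda s}$.

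Next I would strip the claim down to a clean scalar inequality. Assuming $\vecu_{k_1}^\top\vecy \neq 0$ and $\vecu_{k_2}^\top\vecy \neq 0$ (so the projections are nonzero and the ratio is well defined), the equal-convergence condition $\beta_{k_1}(t) = \kd{\beta}_{k_1}(\kd{t})$ becomes $a_{\lambda_1}(t) = a_{\lambda_1}(\kd{t})\,a_{\lambda_1}(\teacher{T})$, and the goal becomes $a_{\lambda_2}(\kd{t})\,a_{\lambda_2}(\teacher{T}) < a_{\lambda_2}(t)$ (all factors positive, so absolute values drop), writing $\lambda_1 = \lambda_{k_1} > \lambda_2 = \lambda_{k_2}$. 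Setting $a := e^{-\lambda_1 \kd{t}}$, $b := e^{-\lambda_1\teacher{T}} \in (0,1)$ and $\rho := \lambda_2/\lambda_1 \in (0,1)$, I would use $e^{-\lambda_2 s} = (e^{-\lambda_1 s})^\rho$ to express each factor as a power. The constraint $1 - e^{-\lambda_1 t} = (1-a)(1-b)$ pins down $e^{-\lambda_1 t} = a + b - ab$, and substituting into the goal and expanding the product reduces everything to
\begin{equation}
  (a + b - ab)^\rho + (ab)^\rho < a^\rho + b^\rho, \qquad a, b \in (0,1),\ \rho \in (0,1).
\end{equation}

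The final step is to recognize the majorization structure hidden in this inequality. Because $(a + b - ab) + (ab) = a + b$, the pair $\{a+b-ab,\ ab\}$ has the same sum as $\{a, b\}$; moreover $ab \le \min(a,b)$ and $a + b - ab \ge \max(a,b)$, with all four numbers lying in $(0,1)$, so $\{a+b-ab,\ ab\}$ majorizes $\{a,b\}$. Since $x \mapsto x^\rho$ is strictly concave on $(0,1)$ for $\rho \in (0,1)$, Karamata's inequality — equivalently, the fact that $s \mapsto (m+s)^\rho + (m-s)^\rho$ is strictly decreasing in the spread $s$ about a fixed mean $m$ — yields the strict inequality, with strictness guaranteed because $a + b - ab \neq a$ for all $a,b \in (0,1)$. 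This establishes the reduced inequality and hence the theorem.

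I expect the main obstacle to be not the (routine) diagonalization but spotting that the equal-convergence constraint forces precisely the sum-preserving spread $e^{-\lambda_1 t} = a + b - ab$, i.e.\ $1 - e^{-\lambda_1 t} = (1-a)(1-b)$; once this algebraic identity is seen, the majorization/concavity argument is immediate. A minor technical point to handle carefully is the nondegeneracy assumption $\vecu_{k_j}^\top\vecy \neq 0$, which is needed for the ratio in the statement to be well defined and for the constraint to be non-vacuous.
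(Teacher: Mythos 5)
Your proof is correct and follows essentially the same route as the paper's: both use the closed forms with $\vecA(t)=\vecI-e^{-t\vecX\vecX^\top}$ and $\kd{\vecA}(\kd{t})=\vecA(\kd{t})\vecA(\teacher{T})$, reduce the claim via the equal-convergence constraint $1-e^{-\lambda_1 t}=(1-a)(1-b)$ to the scalar inequality $(a+b-ab)^\rho+(ab)^\rho<a^\rho+b^\rho$ for $a,b\in(0,1)$, $\rho=\lambda_2/\lambda_1\in(0,1)$, and conclude by concavity of $u\mapsto u^\rho$ --- the paper phrases this final step as the monotonic decrease of $f(u)=u^\kappa+(a+b-u)^\kappa$ on $\left[\frac{a+b}{2},\,a+b\right]$, which is precisely your spread-decreasing/Karamata observation for two-point majorization. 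Your explicit nondegeneracy assumption $\vecu_{k_j}^\top\vecy\neq 0$ is left implicit in the paper (it is in fact needed for the ratio in the theorem to be well defined), so flagging it is a mild refinement rather than a deviation.
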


\begin{proof}(of Theorem~\ref{thm:sparsification})

Recall that the closed form solution for the teacher is given as:

\begin{align}
    \vecbeta(t) &= \vecX^\top (\vecX \vecX^\top)^{-1} \vecA(t) \vecy \, \label{eq:teacher-closed-form} \\
    \text{ where }
    \vecA(t) &:= \vecI -e^{-t \vecX \vecX^\top}. \label{eq:teacher-sparsity}
\end{align}

Similarly, by plugging in the teacher's labels into the above equation, the closed form solution for the student can be expressed as:

\begin{align}
   \kd{\vecbeta}(\tilde{t}) &= \vecX^\top (\vecX \vecX^\top)^{-1} \kd{\vecA}(\tilde{t})  \vecy \, \label{eq:student-closed-form} \\
   \text{ where } \kd{\vecA}(\tilde{t}) &:= {\vecA}(t)\vecA(\teacher{T}). \label{eq:student-sparsity}
\end{align}

Let $\alpha_k(t), \kd{\alpha}_k(\kd{t})$ be the eigenvalues of the $k$'th eigendirection in $\vecA(t)$ and $\kd{\vecA}(\kd{t})$ respectively. We are given $\beta_{k_1}(t) = \kd{\beta}_{k_1}(\kd{t})$. From the closed form expression for the two models in Eq~\ref{eq:teacher-closed-form} and Eq~\ref{eq:student-closed-form}, we can infer $\alpha_{k_1}(t) = \kd{\alpha}_{k_1}(\kd{t})$. Similarly, from the closed form expression, it follows that in order to prove $|\beta_{k_2}(t)| > |\kd{\beta}_{k_2}(\kd{t})|$, it suffices to prove
$\alpha_{k_2}(t) > \kd{\alpha}_{k_2}(\kd{t})$.

For the rest of the discussion, for convenience of notation, we assume $k_1=1$ and $k_2=2$ without loss of generality. Furthermore, we define $\alpha_{1}^\star = \alpha_{1}(t) = \kd{\alpha}_1(\kd{t})$. 

From the teacher's system of equations in Eq~\ref{eq:teacher-sparsity}, $\alpha_1^\star = 1-e^{-\lambda_1 t}$. Hence, we can re-write $\alpha_2(t)$ as:

\begin{align}
    \alpha_{2}(t) &= 1-e^{-\lambda_{2} t} \\
    & = 1-\left(e^{-\lambda_{1} t}\right)^{\frac{\lambda_{2}}{\lambda_{1}}} \\
    & = 1-\left(1-\alpha_{1}^\star\right)^{\frac{\lambda_{2}}{\lambda_{1}}}. \label{eq:teacher_alpha_2}
\end{align}

Similarly for the student, from Eq~\ref{eq:student-sparsity},
\begin{align}
    \alpha_1^\star &= (1-e^{-\lambda_1 \kd{t}})(1-e^{-\lambda_1 \teacher{T}}). \label{eq:student_alpha_1}
\end{align}

Hence, we can re-write $\kd{\alpha}_2(\kd{t})$ as:

\begin{align}
    \kd{\alpha}_{2}(\kd{t}) &= (1-e^{-\lambda_{2} \kd{t}}) \cdot (1-e^{-\lambda_{2} \teacher{T}})  \\
    & = \left(1-\left(e^{-\lambda_{1} \kd{t}}\right)^{\frac{\lambda_{2}}{\lambda_{1}}}\right) \cdot \left(1-\left(e^{-\lambda_{1} \teacher{T}}\right)^{\frac{\lambda_{2}}{\lambda_{1}}}\right)  \label{eq:student_alpha_2} 
\end{align}

For convenience, let us define $a:= e^{-\lambda_1 \kd{t}}$, $b:= e^{-\lambda_1 \teacher{T}}$ and $\kappa = \lambda_2/ \lambda_1$. Then, rewriting Eq~\ref{eq:student_alpha_1}, we get

\begin{align}
    \alpha_1^\star &= (1-a)(1-b). 
\end{align}
Plugging this into Eq~\ref{eq:teacher_alpha_2},

\begin{align}
    \alpha_2(t) &= 1-(1-(1-a)(1-b))^\kappa.
\end{align}

Similarly, rewriting Eq~\ref{eq:student_alpha_2}, in terms of $a,b,\kappa$:
\begin{align}
    \kd{\alpha}_2(\kd{t}) &= (1-a^\kappa)(1-b^\kappa).
\end{align}

We are interested in the sign of $\alpha_2(t) - \kd{\alpha}_2(\kd{t})$. Let $f(u) = u^\kappa + (a+b-u)^\kappa$. Then, we can write this difference as follows:

\begin{align}
    \alpha_2(t) - \kd{\alpha}_2(\kd{t}) &= a^\kappa + b^\kappa - (ab)^\kappa - (1-(1-a)(1-b))^\kappa \\
    & = a^\kappa + b^\kappa - \left( (ab)^\kappa + (a+b-ab)^\kappa \right) \\
    & = f(a) - f(a+b(1-a)) = f(b) - f(b+a(1-b)). \label{eq:a-b-sum}
\end{align}

To prove that last expression in terms of $f$ resolves to a positive value, we make use of the fact that when $\kappa \in (0,1)$, $f(u)$ attains its maximum at $u=\frac{a+b}{2}$, and is monotonically decreasing for $u \in \left[\frac{a+b}{2}, a+b \right] $. Note that $\kappa$ is indeed in $(0,1)$ because $\lambda_2 < \lambda_1$.  Since $\tilde{t} > 0$ and $\teacher{T} > 0$, $a \in (0, 1)$ and $b \in (0, 1)$. Since $f$ is symmetric with respect to $a$ and $b$, without loss of generality, let $a$ be the larger of $\{a,b \}$. 

Since $a < 1$, and $b > 0$, we have $a+b(1-a) > a$. Also since $a$ is the larger of the two, we have $a > \frac{a+b}{2}$. Combining these two, $a+b > a+b(1-a) > a > \frac{a+b}{2}$. Thus, from the monotonic decrease of $f$ for $u \in \left[\frac{a+b}{2}, a+b \right] $,  $f(a) > f(a+b(1-a))$. Thus, 
 \begin{align}
    \alpha_2(t) - \kd{\alpha}_2(\kd{t}) > 0,
\end{align}
 
 proving our claim.

\end{proof}

\newpage

\section{Further experiments on student-teacher deviations}
\label{app:experiments}
\subsection{Details of experimental setup}
\label{app:hyperparameters}

We present details on relevant hyper-parameters for our experiments.

\textbf{Model architectures}.
For all image datasets (CIFAR10, CIFAR100, Tiny-ImageNet, ImageNet), 
we use ResNet-v2~\citep{He:2016} and
MobileNet-v2~\citep{Sandler:2018},
models.
Specifically, 
for CIFAR,
we consider the CIFAR ResNet-$\{ 56, 20 \}$ family  and MobileNet-v2 architectures;
for Tiny-ImageNet, 
we consider the ResNet-$\{ 50, 18 \}$ family and MobileNet-v2 architectures; for ImageNet we consider ResNet-18 family  based on the TorchVision implementation.
For all ResNet models, we employ standard augmentations as per~\citet{He:2016a}.

For all text datasets (MNLI, AGNews, QQP, IMDB),
we fine-tune a pre-trained RoBERTa~\citep{Liu:2019b} model. 
We consider combinations of cross-architecture- and self-distillation with RoBERTa -Base, -Medium and -Small architectures.

\textbf{Training settings}.
We train using minibatch SGD 
applied to the softmax cross-entropy loss.
For all image datasets, 
we follow the settings in Table~\ref{tbl:training-settings}. For the noisy CIFAR dataset, for $20\%$ of the data we randomly flip the one-hot label to another class. Also note that, we explore two different hyperparameter settings for CIFAR100, for ablation.
For all text datasets, we use a batch size of $64$, and train for $25000$ steps.
We use a peak learning rate of $10^{-5}$, with $1000$ warmup steps, decayed linearly.
For the distillation experiments on text data, we use a distillation weight of $1.0$.
We use temperature 
$\tau = 2.0$ for MNLI,
$\tau = 16.0$ for IMDB, 
$\tau = 1.0$ for QQP, 
and 
$\tau = 1.0$ for AGNews.

For all CIFAR experiments in this section we use GPUs. These experiments take a couple of hours. We run all the other experiments on TPUv3. The ImageNet experiments take around 6-8 hours, TinyImagenet a couple of hours and the RoBERTA-based experiments take $\approx 12$ hours. Note that for all the later experiments in support of our eigenspace theory (Sec~\ref{app:eigenspace-verify}), we only use a CPU; these finish in few minutes each.

\begin{table}[!t]
    \caption{Summary of training settings on image data.}
        \label{tbl:training-settings}
    \centering
    \renewcommand{\arraystretch}{1.25}
    \resizebox{\linewidth}{!}{%
    \begin{tabular}{@{}llllll@{}}
        \toprule
        \toprule
        \textbf{Hyperparameter} &  \textbf{CIFAR10* v1} & \textbf{CIFAR100 v2} & \textbf{Tiny-ImageNet} & \textbf{ImageNet}\\ %
        (based on) &  & \citet{Tian:2020} & & \citet{Cho:2019} \\
        \toprule
        Weight decay          &  $5\cdot 10^{-4}$    & $5\cdot 10^{-4}$ & $5 \cdot 10^{-4}$ & $10^{-4}$ &\\
        Batch size               & $1024$ &  $64$    & $128$ & $1024$ \\
        Epochs                   & $450$  &  $240$    & $200$  &  $90$ \\
        Peak learning rate       & $1.0$   &  $0.05$   & $0.1$  & $0.4$ \\
        Learning rate warmup epochs & $15$ & $1$     & $5$  & $5$\\
        Learning rate decay factor & $0.1$ & $0.1$     & $0.1$ & Cosine schedule \\
        Nesterov momentum       &  $0.9$  & $0.9$     & $0.9$ & $0.9$ \\
        Distillation weight      &  $1.0$ & $1.0$       & $1.0$ & $0.1$ &   \\
        Distillation temperature  & $4.0$ & $4.0$       & $4.0$ & $4.0$ \\
        Gradual loss switch window & $1k$ steps & $1k$ steps & $10k$ steps  & $1k$ steps \\
        \bottomrule
    \end{tabular}
    }
\end{table}

\begin{table}[!t]
    \centering
    \caption{Summary of train and test performance of various distillation settings.}
    \label{tbl:model-performance}
    \renewcommand{\arraystretch}{1.25}
    
    \resizebox{\linewidth}{!}{%
    \begin{tabular}{@{}lllllllll@{}}
        \toprule
        \toprule
        \textbf{Dataset} & \textbf{Teacher} & \textbf{Student} & \multicolumn{3}{c}{\textbf{Train accuracy}} & \multicolumn{3}{c}{\textbf{Test accuracy}} \\
        & & & \textbf{Teacher} & \textbf{Student (OH)} & \textbf{Student (DIST)} & 
        \textbf{Teacher} & \textbf{Student (OH)} & \textbf{Student (DIST)} \\
        \toprule
        CIFAR10 & ResNet-56 & ResNet-56  
        & 100.00 & 100.00 & 100.00
        & 93.72 & 93.72 & 93.9 \\
        & ResNet-56 & ResNet-20  
        & 100.00 & 99.95 & 99.60
        & 93.72 & 91.83 & 92.94 \\
        & ResNet-56 & MobileNet-v2-1.0  
        & 100.00 & 100.00 & 99.96
        & 93.72 & 85.11 & 87.81 \\
        & MobileNet-v2-1.0 & MobileNet-v2-1.0 
        & 100.00 & 100.00 & 100.00
        & 85.11 & 85.11 & 86.76 \\
        \midrule
        CIFAR100 & ResNet-56 & ResNet-56
        & 99.97 & 99.97 & 97.01        
        & 72.52 & 72.52 & 74.55 \\
        & ResNet-56 & ResNet-20  
        & 99.97 & 94.31 & 84.48
        & 72.52 & 67.52 & 70.87 \\
        & MobileNet-v2-1.0 & MobileNet-v2-1.0  
        & 99.97 & 99.97 & 99.96
        & 54.32 & 54.32 & 56.32 \\
        & ResNet-56 & MobileNet-v2-1.0  
        & 99.97 & 99.97 & 99.56
        & 72.52 & 54.32 & 62.40 \\
        (v2 hyperparams.)  & ResNet-56 & ResNet-56
        & 96.40 & 96.40 & 87.61        
        & 73.62 & 73.62 & 74.40 \\
        \midrule
        CIFAR100 (noisy) & ResNet-56 & ResNet-56 
        & $99.9$ & $99.9$ & $95.6$
        & $69.8$ & $69.8$ & $72.7$ \\
       & ResNet-56 & ResNet-20 
       & $99.9$ & $91.4$ & $82.8$ 
       & $69.8$ & $64.9$ & $69.2$ \\
        \midrule
        Tiny-ImageNet & ResNet-50 & ResNet-50 
        & 98.62 & 98.62 & 94.84
        & 66 & 66 & 66.44 \\
        & ResNet-50 & ResNet-18
        & 98.62 & 93.51 & 91.09
        & 66 & 62.78 & 63.98 \\
        & ResNet-50 & MobileNet-v2-1.0 
        & 98.62 & 89.34 & 87.90
        & 66 & 62.75 & 63.97 \\
        & MobileNet-v2-1.0 & MobileNet-v2-1.0 
        & 89.34 & 89.34 & 82.26
        & 62.75 & 62.75 & 63.28 \\
        \midrule
        ImageNet & ResNet-18 & ResNet-18 (full KD)
        & 78.0 & 78.0 & 72.90
        & 69.35 & 69.35 & 69.35 \\
        & ResNet-18 & ResNet-18 (late KD)
        & 78.0 & 78.0 & 71.65
        & 69.35 & 69.35 & 68.3 \\
        & ResNet-18 & ResNet-18 (early KD)
        & 78.0 & 78.0 & 79.1
        & 69.35 & 69.35 & 69.75 \\
        \midrule
        \midrule
        MNLI & RoBERTa-Base & RoBERTa-Small 
        & $92.9$ & $72.1$ & $72.6$ 
        & $87.4$ & $69.9$ & $70.3$ \\
         & RoBERTa-Base & RoBERTa-Medium 
        & $92.9$ & $88.2$  & $86.8$
        & $87.4$ & $83.8$ &  $84.1$ \\
           & RoBERTa-Small & RoBERTa-Small 
        & $72.1$ & $72.1$ & $71.0$ 
        & $69.9$ & $69.9$ & $69.9$ \\
           & RoBERTa-Medium & RoBERTa-Medium 
        & $88.2$ & $88.2$ & $85.6$ 
        & $83.8$ & $83.8$ & $83.5$ \\
        \midrule
        IMDB   & RoBERTa-Small & RoBERTa-Small 
        & $100.0$ & $100.0$ & $99.1$ 
        & $90.4$ & $90.4$ & $91.0$ \\
           & RoBERTa-Base & RoBERTa-Small 
        & $100.0$ & $100.0$ & $99.9$ 
        & $95.9$ & $90.4$ & $90.5$ \\
        \midrule
        QQP    & RoBERTa-Small & RoBERTa-Small 
        & $85.0$ & $85.0$ & $83.2$ 
        & $83.5$ & $83.5$ & $82.5$ \\
            & RoBERTa-Medium & RoBERTa-Medium 
        & $92.3$ & $92.3$ & $90.5$ 
        & $89.7$ & $89.7$ & $89.0$ \\
        & RoBERTa-Base & RoBERTa-Small 
        & $93.5$ & $85.0$ & $85.1$ 
        & $90.5$ & $83.5$ & $84.0$ \\
        \midrule
        AGNews & RoBERTa-Small & RoBERTa-Small 
        & $96.3$ & $96.3$ & $95.7$ 
        & $93.6$ & $93.6$ & $93.3$ \\
         & RoBERTa-Base & RoBERTa-Medium 
        & $99.2$ & $98.4$ & $97.8$ 
        & $95.2$ & $95.2$ & $94.5$ \\
         & RoBERTa-Base & RoBERTa-Small 
        & $99.2$ & $96.3$ & $96.0$ 
        & $95.2$ & $93.6$ & $93.6$ \\
        \bottomrule
    \end{tabular}%
    }

\end{table}

\subsection{Scatter plots of probabilities}
\label{app:logit-logit}

In this section, we present additional scatter plots of the teacher-student logit-transformed probabilities for the class corresponding to the teacher's top prediction:
Fig~\ref{fig:in-logit-plots} (for ImageNet), Fig~\ref{fig:cifar100-logit-plots},\ref{fig:more-cifar100-logit-plots} (for CIFAR100), Fig~\ref{fig:tinyin-logit-plots} (for TinyImagenet), Fig~\ref{fig:cifar10-logit-plots} (for CIFAR10), Fig~\ref{fig:mnli-agnews} (for MNLI and AGNews settings),  Fig~\ref{fig:self-distillation-language} (for further self-distillation on QQP, IMDB and AGNews) and Fig~\ref{fig:cross-distillation-language} (for cross-architecture distillation on language datasets).  Below, we qualitatively describe how confidence exaggeration manifests (or does not) in these settings. We attempt a quantitative summary subsequently in Sec~\ref{app:quantification}.

\textbf{Image data.} First, across {\em all} the 18 image settings, we observe an underfitting of the low-confidence points on \textit{test} data. Note that this is highly prominent in some settings (e.g., CIFAR100, MobileNet self-distillation in Fig~\ref{fig:cifar100-logit-plots} fourth column, second row), but also faint in other settings  (e.g., CIFAR100, ResNet56-ResNet20 distillation in Fig~\ref{fig:cifar100-logit-plots} second column, second row). 

Second, on the training data, this occurs in a majority of settings (13 out of 18) except CIFAR100 MobileNet self-distillation (Fig~\ref{fig:cifar100-logit-plots} fourth column) and three of the four CIFAR10 experiments. In all the CIFAR100 settings where this occurs, this is more prominent on training data than on test data.

Third, in a few settings, we also find an overfitting of high-confidence points, indicating a second type of exaggeration. In particular, this occurs for our second hyperparameter setting in CIFAR100 (Fig~\ref{fig:more-cifar100-logit-plots} last column), Tiny-ImageNet with a ResNet student (Fig~\ref{fig:tinyin-logit-plots} first and last column).

\textbf{Language data.}  In the language datasets, we  find the student-teacher deviations to be different in pattern from the image datasets. We find for lower-confidence points, there is typically both significant underfitting and overfitting (i.e., $|Y-X|$ is larger for small $X$); for high-confidence points, there is less deviation, and if any, the deviation is from overfitting ($Y>X$ for large $X$). %

This behavior is most prominent in four of the settings plotted in Fig~\ref{fig:mnli-agnews}. We find a weaker manifestation in four other settings in Fig~\ref{fig:self-distillation-language}. Finally in Fig~\ref{fig:cross-distillation-language}, we report the scenarios where we do not find a meaningful behavior. What is however consistent is that there {is} always a stark deviation in all the above settings.

\textbf{Exceptions:} In summary, we find patterns in all but the following exceptions:

\begin{enumerate}
    \item  For MobileNet self-distillation on CIFAR100, and for three of the CIFAR10 experiments, we find no underfitting of the lower-confidence points\textit{ on the training dataset} (but they hold on test set). Furthermore, in all these four settings, we curiously find an underfitting of
    the high-confidence points in both test and training data. 
    \item  Our patterns break down in a four of the {\em cross-architecture} settings of language datasets. This may be because certain cross-architecture effects dominate over the more subtle underfitting effect. 
\end{enumerate}

\begin{figure*}[!t]
\centering
\subfloat{
\begin{tabular}[b]{@{}c@{}c@{}c@{}}
\includegraphics[width=.21\linewidth]{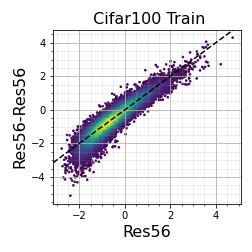}%
\includegraphics[width=.21\linewidth]{Images/student_scatter_inv-t-probs_-1_train_final/cf100.res56.res20.png}
\includegraphics[width=.21\linewidth]{Images/student_scatter_inv-t-probs_-1_train_final/cf100.res56.mobiv2.png}
\includegraphics[width=.21\linewidth]{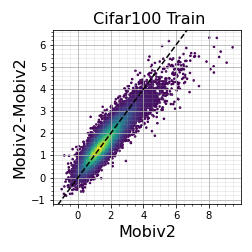}
\\[-3pt]
\includegraphics[width=.21\linewidth]{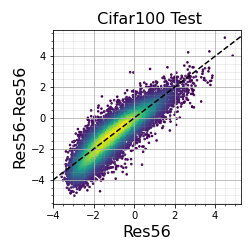}%
\includegraphics[width=.21\linewidth]{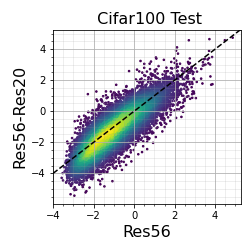}
\includegraphics[width=.21\linewidth]{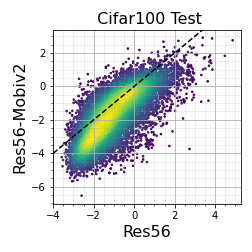}
\includegraphics[width=.21\linewidth]{Images/student_scatter_inv-t-probs_-1_test_final/cf100.mobiv2.mobiv2.png}
\\[-3pt]
\end{tabular}%
}

    \caption{\textbf{Teacher-student logit plots for CIFAR100 experiments:} We report plots for various distillation settings involving ResNet56, ResNet20 and MobileNet-v2 (training data on top, test data in the bottom). We find underfitting of the low-confidence points in the training set in all but the MobileNet self-distillation setting. But in all the settings, we find significant underfitting of the low-confidence points in the \textit{test} dataset.
    }
    \label{fig:cifar100-logit-plots}
\end{figure*}

\begin{figure*}[!t]
\centering
\subfloat{
\begin{tabular}[b]{@{}c@{}c@{}c@{}}
\includegraphics[width=.21\linewidth]{Images/student_scatter_inv-t-probs_-1_train_final/noisycifar100.res56.res20.png}%
\includegraphics[width=.21\linewidth]{Images/student_scatter_inv-t-probs_-1_train_final/noisycifar100.res56.res56.png}
\includegraphics[width=.21\linewidth]{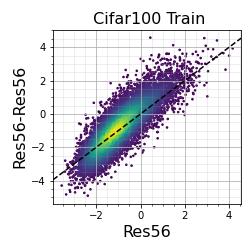}
\\[-3pt]
\includegraphics[width=.21\linewidth]{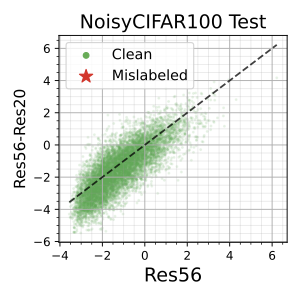}%
\includegraphics[width=.21\linewidth]{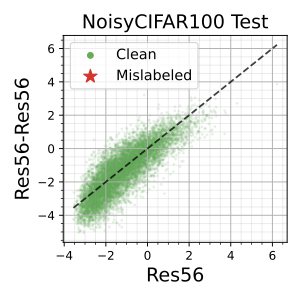}
\includegraphics[width=.21\linewidth]{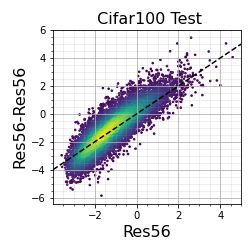}
\\[-3pt]
\end{tabular}%
}
    \caption{\textbf{Teacher-student logit plots for more CIFAR100 experiments:} We report underfitting of low-confidence points for a few other CIFAR100 distillation settings. The first column is self-distillation setting where 20\% of one-hot labels are noisy; the second column on the same data, but cross-architecture; the last column is ResNet-56 self-distillation on the original CIFAR100, but with another set of hyperparameters specified in Table~\ref{tbl:training-settings}. Here we also find overfitting of high-confidence points.
    }
    \label{fig:more-cifar100-logit-plots}
\end{figure*}

\begin{figure*}[!t]
\centering
\begin{tabular}[b]{@{}c@{}c@{}c@{}}
\subfloat[Full KD]{
\begin{tabular}[b]{@{}c}
\includegraphics[width=.21\linewidth]{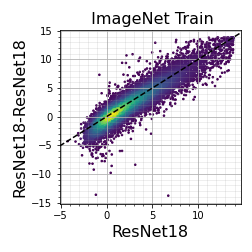} \\
\includegraphics[width=.21\linewidth]{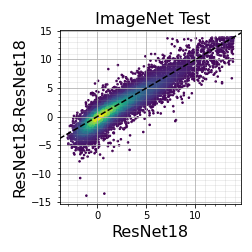}%
\end{tabular}%
}
\subfloat[Late-started KD]{
\begin{tabular}[b]{@{}c}
\includegraphics[width=.21\linewidth]{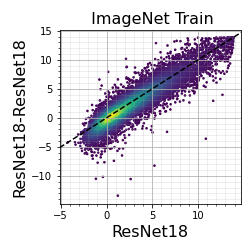} \\
\includegraphics[width=.21\linewidth]{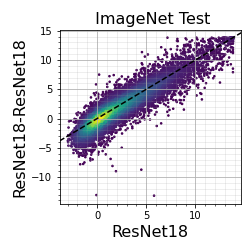}%
\end{tabular}%
}
\subfloat[Early-stopped KD]{
\begin{tabular}[b]{@{}c}
\includegraphics[width=.21\linewidth]{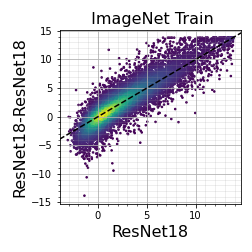} \\
\includegraphics[width=.21\linewidth]{Images/student_scatter_inv-t-probs_-1_test_final/imagenet.res18.res18.stop_early.png}%
\end{tabular}%
}
\end{tabular}%
    \caption{\textbf{Teacher-student logit plots for Imagenet experiments:}  We conduct Imagenet self-distillation  on ResNet18 in three different settings, involving full knowledge distillation, late-started distillation (from exactly mid-way through one-hot training) and early-stopped distillation (again, at the midway point, after which we complete with one-hot training). The plots for the training data are on top, and for test data in the bottom).  Note that \cite{Cho:2019} recommend early-stopped distillation. We find underfitting of low-confidence points in all the settings, with the most underfitting in the last setting.
    }
    \label{fig:in-logit-plots}
\end{figure*}

\begin{figure*}[!t]
\centering
\subfloat{
\begin{tabular}[b]{@{}c@{}c@{}c@{}}
\includegraphics[width=.21\linewidth]{Images/student_scatter_inv-t-probs_-1_train_final/tinyin.res50.res50.png}%
\includegraphics[width=.21\linewidth]{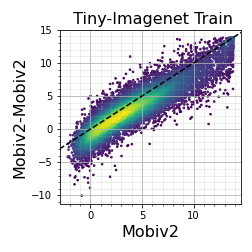}
\includegraphics[width=.21\linewidth]{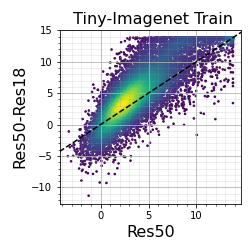}
\includegraphics[width=.21\linewidth]{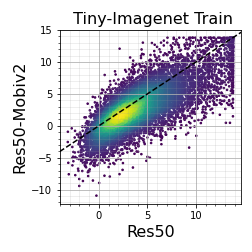}
\\[-3pt]
\includegraphics[width=.21\linewidth]{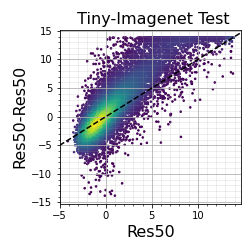}%
\includegraphics[width=.21\linewidth]{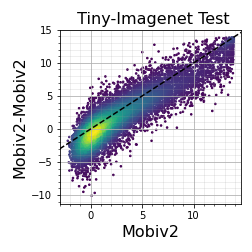}
\includegraphics[width=.21\linewidth]{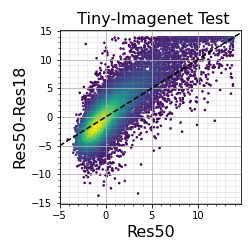}
\includegraphics[width=.21\linewidth]{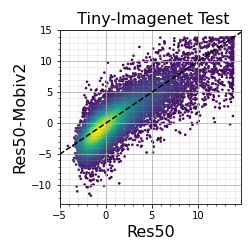}
\end{tabular}%
}

    \caption{\textbf{Teacher-student logit plots for Tiny-Imagenet experiments:} We report plots for various distillation settings involving ResNet50, ResNet18 and MobileNet-v2 (training data on top, test data in the bottom). We find underfitting of the low-confidence points in all the settings. We also find {overfitting of the high-confidence points} when the student is a ResNet. 
    }
    \label{fig:tinyin-logit-plots}
\end{figure*}

\begin{figure*}[!t]
\centering
\subfloat{
\begin{tabular}[b]{@{}c@{}c@{}c@{}}
\includegraphics[width=.21\linewidth]{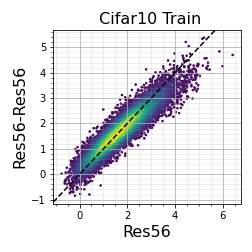}%
\includegraphics[width=.21\linewidth]{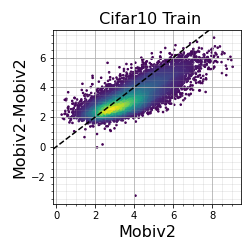}
\includegraphics[width=.21\linewidth]{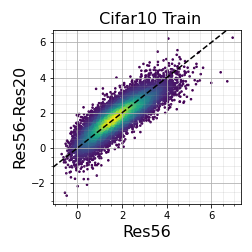}
\includegraphics[width=.21\linewidth]{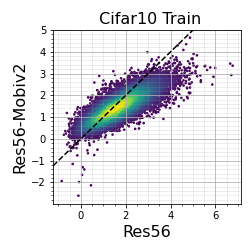}
\\[-3pt]
\includegraphics[width=.21\linewidth]{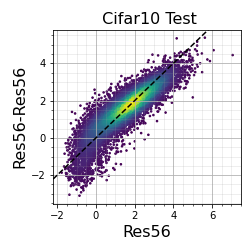}%
\includegraphics[width=.21\linewidth]{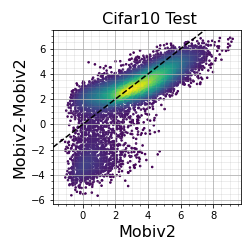}
\includegraphics[width=.21\linewidth]{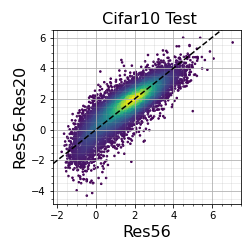}
\includegraphics[width=.21\linewidth]{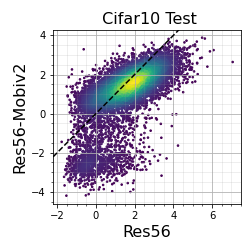}
\\[-3pt]
\end{tabular}%
}

    \caption{\textbf{Teacher-student logit plots for CIFAR10 experiments:} We report plots for various distillation settings involving ResNet56, ResNet20 and MobileNet-v2. We find that the underfitting phenomenon is almost non-existent in the training set (except for ResNet50 to ResNet20 distillation). However the phenomenon is prominent in the test dataset.
    }
    \label{fig:cifar10-logit-plots}
\end{figure*}

\begin{figure*}[!t]
\centering
\subfloat[Self-distillation in MNLI]{
\begin{tabular}[b]{@{}c@{}c@{}c@{}}
\includegraphics[width=.21\linewidth]{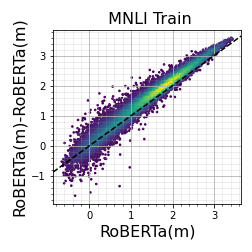}
\includegraphics[width=.21\linewidth]{Images/student_scatter_inv-t-probs_-1_train_final/mnli.robs.robs.png}
\\[-3pt]
\includegraphics[width=.21\linewidth]{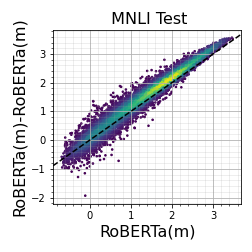}
\includegraphics[width=.21\linewidth]{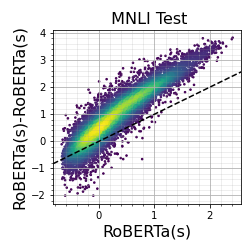}
\\[-3pt]
\end{tabular}%
}
\subfloat[Cross-architecture distillation in MNLI and AGNews]{
\begin{tabular}[b]{@{}c@{}c@{}c@{}}
\includegraphics[width=.21\linewidth]{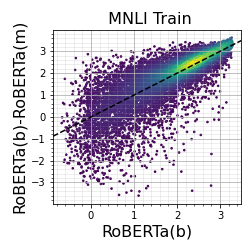}%
\includegraphics[width=.21\linewidth]{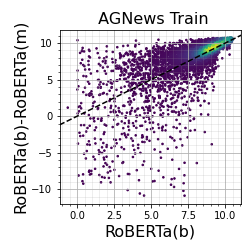}
\\[-3pt]
\includegraphics[width=.21\linewidth]{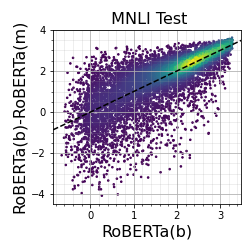}%
\includegraphics[width=.21\linewidth]{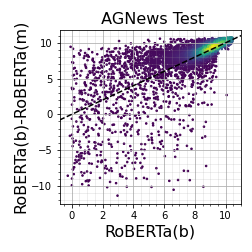}
\\[-3pt]
\end{tabular}%
}
     \caption{\textbf{Teacher-student logit plots for MNLI and AGNews experiments:} We report plots for various distillation settings involving RoBERTa models. On the \textbf{left}, in the self-distillation settings on MNLI, we find significant underfitting of low-confidence points (and also overfitting), while high-confidence points are significantly overfit. On the \textbf{right}, we report cross-architecture (Base to Medium) distillation for MNLI and AGNews. Here, to a lesser extent, we see the same pattern. We interpret this as distillation reducing its ``precision'' on the lower-confidence points (perhaps by ignoring lower eigenvectors that provide finer precision).
    }
    \label{fig:mnli-agnews}
\end{figure*}

\begin{figure*}[!t]
\centering
\subfloat{
\begin{tabular}[b]{@{}c@{}c@{}c@{}}
\includegraphics[width=.20\linewidth]{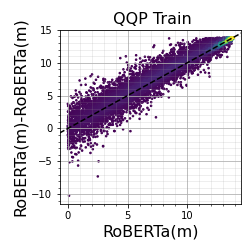}
\includegraphics[width=.20\linewidth]{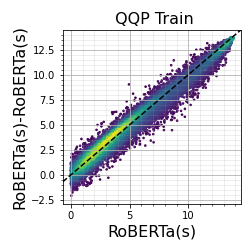}
\includegraphics[width=.20\linewidth]{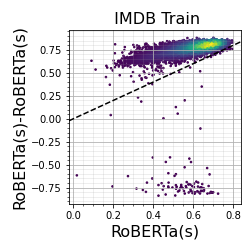}
\includegraphics[width=.21\linewidth]{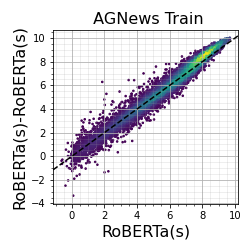}
\\[-3pt]
\includegraphics[width=.20\linewidth]{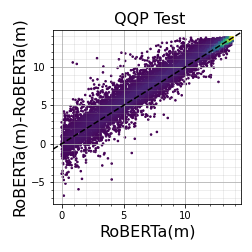}
\includegraphics[width=.20\linewidth]{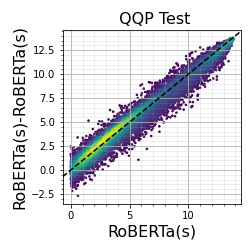}
\includegraphics[width=.20\linewidth]{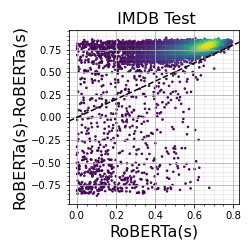}
\includegraphics[width=.20\linewidth]{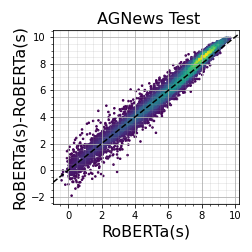}
\\[-3pt]
\end{tabular}%
}

    \caption{\textbf{Teacher-student logit plots for self-distillation in language datasets (QQP, IMDB, AGNews):} We report plots for various self-distillation settings involving RoBERTa models. Except for IMDB training dataset, we find both significant underfitting and overfitting for lower-confidence points (indicating lack of precision), and more precision for high-confidence points. For IMDB test and AGNews, there is an overfitting of the high-confidence points.
    }
    \label{fig:self-distillation-language}
\end{figure*}

\begin{figure*}[!t]
\centering
\subfloat{
\begin{tabular}[b]{@{}c@{}c@{}c@{}}
\includegraphics[width=.21\linewidth]{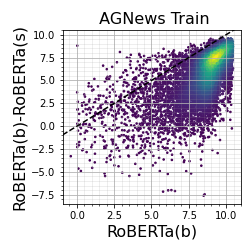}
\includegraphics[width=.21\linewidth]{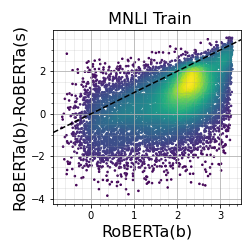}
\includegraphics[width=.21\linewidth]{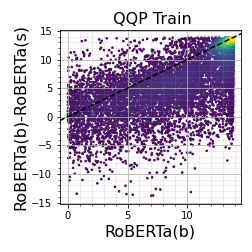}%
\includegraphics[width=.21\linewidth]{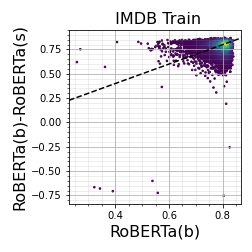}%
\\[-3pt]
\includegraphics[width=.21\linewidth]{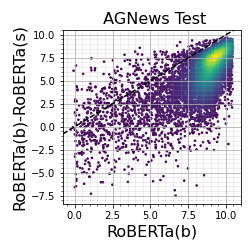}
\includegraphics[width=.21\linewidth]{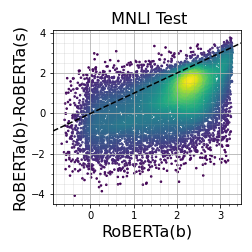}
\includegraphics[width=.21\linewidth]{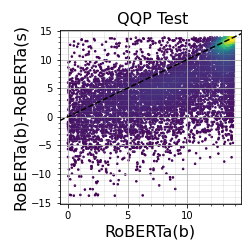}%
\includegraphics[width=.21\linewidth]{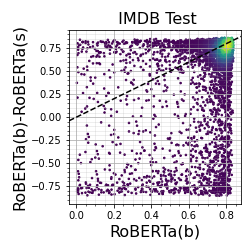}%
\\[-3pt]
\end{tabular}%
}

    \caption{\textbf{Teacher-student logit plots for \textit{cross-architecture} distillation in language datasets (AGNews, QQP, IMDB, MNLI):} We report plots for various cross-architecture distillation settings involving RoBERTa models. While we find significant student-teacher deviations in these settings, our typical patterns do not apply here. We believe that effects due to ``cross-architecture gaps'' may have likely drowned out the underfitting patterns, which is a more subtle phenomenon that shines in self-distillation settings.
    }
    \label{fig:cross-distillation-language}
\end{figure*}

\clearpage

\subsection{Teacher's predicted class vs. ground truth class}
\label{app:gt-vs-teacher}

Recall that in all our scatter plots we have looked at the probabilities of the teacher and the student on the teacher's predicted class i.e., $(\teacher{p}_{\teacher{y}}(x),\student{p}_{\teacher{y}}(x))$ where $\teacher{y} \defEq \operatorname{argmax}_{y' \in [K]} \, { \teacher{p}_{y'}( x ) }$. Another natural alternative would have been to look at the probabilities for the {\em ground truth class}, $(\teacher{p}_{\gt{y}}(x),\student{p}_{\gt{y}}(x))$ where $\gt{y}$ is the ground truth label. We chose to look at $\teacher{y}$ however, because we are interested in the ``shortcomings'' of the distillation procedure where the student only has access to teacher probabilities and not ground truth labels.

Nevertheless, one may still be curious as to what the probabilities for the ground truth class look like. 
First, we note that the plots look almost identical for the {training dataset} owing to the fact that the teacher model typically fits the data to low training error (we skip these plots to avoid redundancy).  However, we find stark differences in the test dataset as shown in Fig~\ref{fig:gt-scatter}. In particular, we see that the underfitting phenomenon is no longer prominent, and almost non-existent in many of our settings. This is surprising as this suggests that the student somehow matches the probabilities on the ground truth class of the teacher {\em despite not knowing what the ground truth class is}.

We note that previous work \citep{Lukasik:2021} has examined deviations on ground truth class probabilities albeit in an aggregated sense (at a class-level rather than at a sample-level). While they find that the student tends to have lower ground truth probability than the teacher on problems with label imbalance,
they do \emph{not} find any such difference on standard datasets without imbalance. This is in alignment with what we find above.

To further understand the underfit points from Sec~\ref{app:logit-logit} (where we plot the probabilities on teacher's predicted class),  in Fig~\ref{fig:grouped_underfitting}, we dissect these plots into four groups: these groups depend on which amongst the teacher and student model classify the point correctly (according to ground truth). We consistently find that the underfit set of points is roughly \textit{the union} of the set of all points where {\em at least one of the models is incorrect}. This has two noteworthy implications. First, in its attempt to deviate from the teacher, the student \textit{corrects} some of the teacher's mistakes. But in doing so, the student also introduces {\em new mistakes} the teacher originally did not make. 
 We conjecture that these may correspond to points which are inherently fuzzy e.g., they are similar to multiple classes.

\begin{figure*}[!t]
\centering
\subfloat{
\begin{tabular}[b]{@{}c@{}c@{}c@{}}
\includegraphics[width=.21\linewidth]{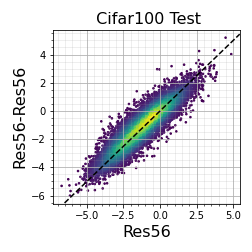}
\includegraphics[width=.21\linewidth]{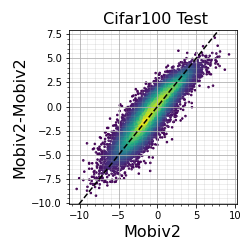}
\includegraphics[width=.21\linewidth]{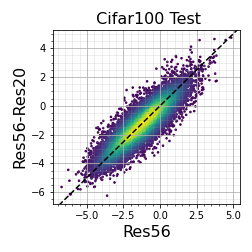}
\includegraphics[width=.21\linewidth]{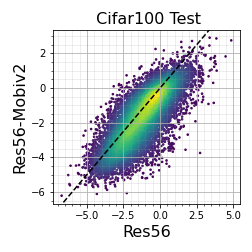}
\\[-3pt]
\includegraphics[width=.21\linewidth]{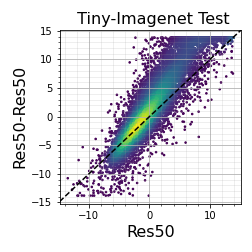}
\includegraphics[width=.21\linewidth]{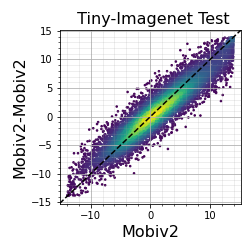}
\includegraphics[width=.21\linewidth]{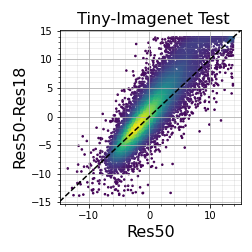}
\includegraphics[width=.21\linewidth]{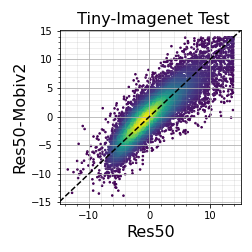}
\\[-3pt]
\includegraphics[width=.21\linewidth]{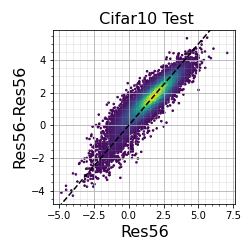}
\includegraphics[width=.21\linewidth]{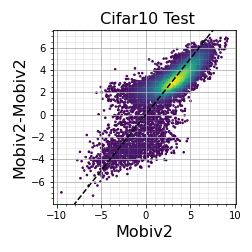}
\includegraphics[width=.21\linewidth]{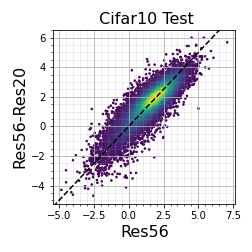}
\includegraphics[width=.21\linewidth]{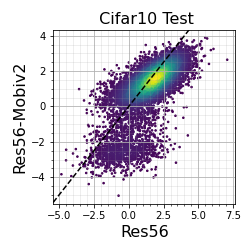}
\end{tabular}%
}

    \caption{\textbf{Scatter plots for ground truth class:} Unlike in other plots where we report the probabilities for the class predicted by the teacher, here we focus on the ground truth class. Recall that the $X$-axis corresponds to the teacher, the $Y$-axis to the student, and all the probabilities are log-transformed.
    Surprisingly, we observe a much more subdued underfitting here, with the phenomenon completely disappearing e.g., in CIFAR100 and CIFAR10 ResNet distillation.  This suggests that the student {preserves} the ground-truth probabilities {despite no knowledge of what the ground-truth class is}, while underfitting on the teacher's predicted class. 
    }
    \label{fig:gt-scatter}
\end{figure*}

\begin{figure*}[!t]
\centering
\subfloat[CIFAR100 MobileNet-v2 self-distillation]{
\includegraphics[width=\linewidth]{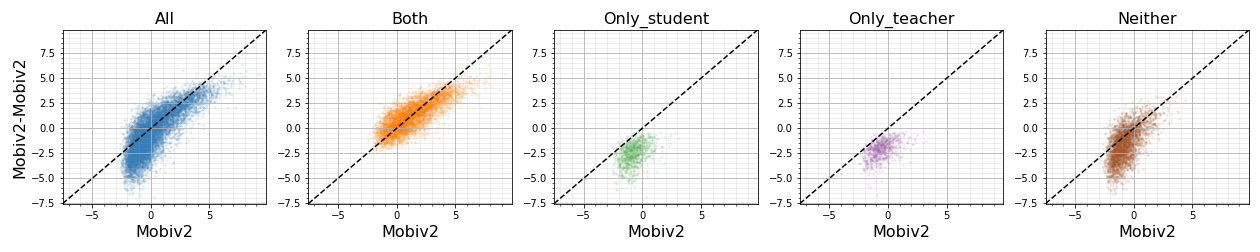}
}\hfill
\subfloat[CIFAR100 ResNet56 self-distillation]{
\includegraphics[width=\linewidth]{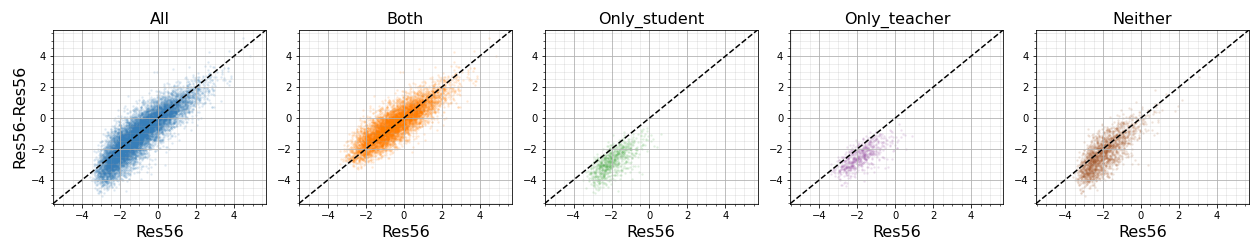}
}\hfill
\subfloat[TinyImageNet ResNet50 self-distillation]{
\includegraphics[width=\linewidth]{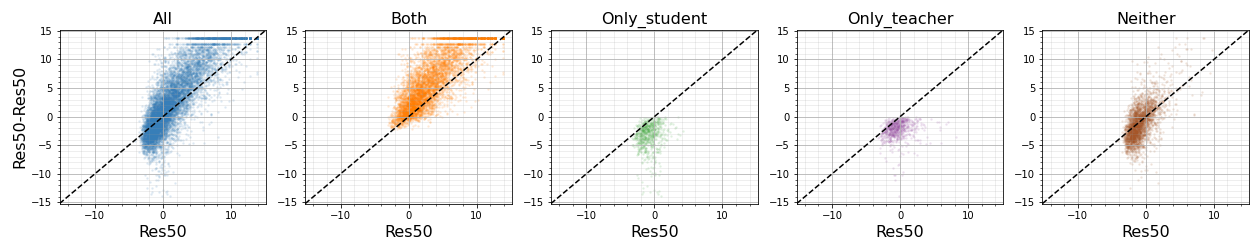}
}
    \caption{\textbf{Dissecting the underfit points:} Across a few settings on TinyImagenet and CIFAR100, we separate the teacher-student scatter plots of logit-transformed probabilities (for teacher's top predicted class) into four subsets:  subsets where both models' top prediction is correct (titled as ``{Both}''), where only the student gets correct (``{Only\_student}''), where only the teacher gets correct (``{Only\_teacher}''), where neither get correct (``{Neither}''). We consistently find that the student's underfit points are points where at least one of the models go wrong.
    }
    \label{fig:grouped_underfitting}
\end{figure*}

\clearpage

\subsection{Quantification of exaggeration}
\label{app:quantification}

Although we report the exaggeration of confidence levels as a qualitative observation, we attempt a quantification for the sake of completeness. To this end, our idea is to fit a least-squares line $Y=mX+c$ through the scatter plots of $(\phi( {\teacher{p}_{\teacher{y}}( x )} ), \phi( {\student{p}_{\teacher{y}}( x )} ))$ and examine the slope of the line. If $m > 1$, we infer that there is an exaggeration of confidence values. Note that this is only a proxy measure and may not always fully represent the qualitative phenomenon.

In the image datasets, recall that this phenomenon most robustly occurred in the teacher's low-confidence points. Hence, we report the values of the slope for the bottom $25\%$-ile points, sorted by the teacher's confidence $\phi( {\teacher{p}_{\teacher{y}}( x )} )$. Table~\ref{tbl:img-self-kd-slope} corresponds to self-distillation and Table~\ref{tbl:img-cross-kd-slope} to cross-architecture. These values  faithfully capture our qualitative observations. In all the image datasets, on test data, the slope \textit{is} greater than $1$. The same holds on training data in a majority of our settings, except for the CIFAR-10 settings, and the CIFAR100 settings with a MobileNet student, where we did qualitatively observe the lack of confidence exaggeration.

For the language datasets, recall that there was both an underfitting and overfitting of low-confidence points, but an overfitting of the high-confidence points. We focus on the latter and report the values of the slope for the top $25\%$-ile points,
 Table~\ref{tbl:lang-self-kd-slope} corresponds to self-distillation and Table~\ref{tbl:lang-cross-kd-slope} to cross-architecture. On test data, the slope is larger than $1$ for seven out of our $12$ settings.  However, we note that we do not see a perfect agreement between these values and our observations from the plots e.g., in IMDB test data, self-distillation of RoBERTa-small, the phenomenon is strong, but this is not represented in the slope.

\begin{table}[!t]
    \centering
    \caption{\textbf{Quantification of confidence exaggeration for \textit{self-}distillation settings on \textit{image} datasets:} Slope greater than $1$ implies confidence exaggeration. Slope is computed for bottom $25\%$ by teacher's confidence.}
    \label{tbl:img-self-kd-slope}
    \renewcommand{\arraystretch}{1.25}
    \resizebox{0.60\linewidth}{!}{%
    \begin{tabular}{@{}lllll@{}}
        \toprule
        \toprule
        \textbf{Dataset} & \textbf{Teacher} & \textbf{Student} & \multicolumn{2}{c}{Slope}  \\
        & & & \textbf{Train} & \textbf{Test} \\
        \toprule
        CIFAR10 & MobileNet-v2-1.0 & MobileNet-v2-1.0 & $0.22$ & $1.37$ \\
         & ResNet-56 & ResNet-56 & $0.87$ & $1.13$ \\
        \midrule
        CIFAR100 & MobileNet-v2-1.0 & MobileNet-v2-1.0 & $0.80$ & $1.22$ \\
         & ResNet-56 & ResNet-56 & $1.26$ & $1.22$ \\
        (noisy) & ResNet-56 & ResNet-56 &  $1.55$ & $1.19$ \\
        (v2 hyperparameters) & ResNet-56 & ResNet-56 & $1.25$ &  $1.31$ \\
        \midrule
        Tiny-ImageNet & MobileNet-v2-1.0 & MobileNet-v2-1.0 & $1.24$ & $1.22$ \\
         & ResNet-50 & ResNet-50 & $1.97$ & $1.20$ \\
        \midrule
        ImageNet & ResNet-18 & ResNet-18 (full KD) & $1.27$ & $1.22$ \\
         & ResNet-18 & ResNet-18 (late KD) & $1.26$ & $1.24$ \\
         & ResNet-18 & ResNet-18 (early KD) & $1.38$ & $1.37$ \\
        \bottomrule
    \end{tabular}%
    }
\end{table}

\begin{table}[!t]
    \centering
    \caption{\textbf{Quantification of confidence exaggeration for \textit{cross-}distillation settings on \textit{image} datasets:} Slope greater than $1$ implies confidence exaggeration. Slope is computed for bottom $25\%$ by teacher's confidence.}
    \label{tbl:img-cross-kd-slope}
    \renewcommand{\arraystretch}{1.25}
    \resizebox{0.50\linewidth}{!}{%
    \begin{tabular}{@{}lllll@{}}
        \toprule
        \toprule
        \textbf{Dataset} & \textbf{Teacher} & \textbf{Student} & \multicolumn{2}{c}{Slope}  \\
        & & & \textbf{Train} & \textbf{Test} \\
        \toprule
        CIFAR10 & ResNet-56 & MobileNet-v2-1.0 & $0.57$ & $1.18$ \\
         & ResNet-56 & ResNet-20 & $1.05$ & $1.16$ \\
        \midrule
        CIFAR100 & ResNet-56 & MobileNet-v2-1.0 & $0.95$ & $1.03$ \\
         & ResNet-56 & ResNet-20 & $1.26$ & $1.12$ \\
         (noisy) & ResNet-56 & ResNet-20 & $1.50$ & $1.60$\\
        \midrule
        Tiny-ImageNet & ResNet-50 & MobileNet-v2-1.0 & $1.29$ & $1.08$ \\
        & ResNet-50 & ResNet-18 & $1.69$ & $1.23$ \\
        \bottomrule
    \end{tabular}%
    }
\end{table}

\begin{table}[!t]
    \centering
    \caption{\textbf{Quantification of confidence exaggeration for \textit{self-}distillation settings on \textit{language} datasets:} Slope greater than $1$ implies confidence exaggeration. Slope is computed for top $25\%$ points by teacher's confidence.}
    \label{tbl:lang-self-kd-slope}
    \renewcommand{\arraystretch}{1.25}
    \resizebox{0.50\linewidth}{!}{%
    \begin{tabular}{@{}lllll@{}}
        \toprule
        \toprule
        \textbf{Dataset} & \textbf{Teacher} & \textbf{Student} & \multicolumn{2}{c}{Slope}  \\
        & & & \textbf{Train} & \textbf{Test} \\
        \toprule
        MNLI & RoBERTa-Small & RoBerta-Small & $1.28$ & $1.30$ \\ 
         & RoBERTa-Medium & RoBerta-Medium & $0.98$ & $1.00$ \\ 
        \midrule
        IMDB & RoBERTa-Small & RoBerta-Small & $0.37$ & $0.38$ \\ 
        \midrule
        QQP & RoBERTa-Small & RoBerta-Small & $1.02$ & $1.01$ \\ 
        & RoBERTa-Medium & RoBerta-Medium & $0.54$ & $0.59$ \\ 
        \midrule
        AGNews & RoBERTa-Small & RoBerta-Small & $1.03$ & $1.02$ \\ 
        \bottomrule
    \end{tabular}%
    }
\end{table}

\begin{table}[!t]
    \centering
    \caption{\textbf{Quantification of confidence exaggeration for \textit{cross-}distillation settings on \textit{language} datasets:} Slope greater than $1$ implies confidence exaggeration. Slope is computed for top $25\%$ of points by teacher's confidence.}
    \label{tbl:lang-cross-kd-slope}
    \renewcommand{\arraystretch}{1.25}
    \resizebox{0.50\linewidth}{!}{%
    \begin{tabular}{@{}lllll@{}}
        \toprule
        \toprule
        \textbf{Dataset} & \textbf{Teacher} & \textbf{Student} & \multicolumn{2}{c}{Slope}  \\
        & & & \textbf{Train} & \textbf{Test} \\
        \toprule
        MNLI & RoBERTa-Base & RoBerta-Small & $1.69$ & $1.68$ \\ 
         & RoBERTa-Base & RoBerta-Medium & $1.10$ & $1.19$ \\ 
        \midrule
        IMDB & RoBERTa-Base & RoBerta-Small & $-0.70$ & $0.60$ \\ 
        \midrule
        QQP & RoBERTa-Base & RoBerta-Small & $23.20$ & $21.53$ \\ 
        \midrule
        AGNews & RoBERTa-Base & RoBerta-Small &  $0.90$ & $1.10$\\ 
        & RoBERTa-Base & RoBerta-Medium & $0.88$ & $0.88$ \\ 
        \bottomrule
    \end{tabular}%
    }
\end{table}

\clearpage

\subsection{Ablations}
\label{app:ablations}

We provide some additional ablations in the following section.

\textbf{Longer training:} In Fig~\ref{fig:ablations} (left two images), we conduct experiments where we run knowledge distillation with the ResNet-56 student on CIFAR100 for $2.3\times$ longer ($50k$ steps instead of $21.6k$ steps overall) and with the ResNet-50 student on TinyImagenet for about $2 \times$ longer ($300k$ steps over instead of roughly $150k$ steps). We find the resulting plots to continue to have the same underfitting as the earlier plots. It is worth noting that in contrast, in a linear setting, it is reasonable to expect the underfitting to disappear after sufficiently long training. Therefore, the persistent underfitting in the non-linear setting is remarkable and suggests one of two possibilities:
\begin{itemize}
    \item The underfitting is persistent simply because the student is not trained sufficiently long enough i.e., perhaps, when trained $10 \times$ longer, the network might end up fitting the teacher probabilities perfectly.
    \item The network has reached a local optimum of the knowledge distillation loss and can never fit the teacher precisely. This may suggest an added regularization effect in distillation, besides the eigenspace regularization. %
\end{itemize}

\textbf{Smaller batch size/learning rate:} In Fig~\ref{fig:ablations} (right image), we also verify that in the CIFAR100 setting if we set peak learning rate to $0.1$ (rather than $1.0$) and batch size to $128$ (rather than $1024$), our observations still hold.  This is in addition to the second hyperparameter setting for CIFAR100 in Fig~\ref{fig:more-cifar100-logit-plots}.

\begin{figure*}[!t]
\centering
\subfloat{
\begin{tabular}[b]{@{}c@{}c@{}c@{}}
\includegraphics[width=.3\linewidth]{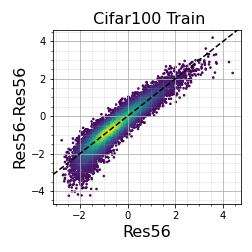}
\includegraphics[width=.3\linewidth]{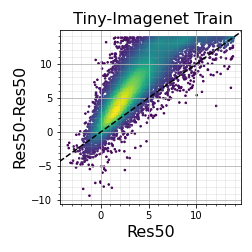}
\includegraphics[width=.3\linewidth]{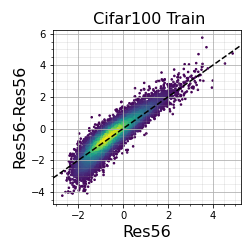}
\end{tabular}%
}
    \caption{{\textbf{Underfitting holds for longer runs and for smaller batch sizes:} For the self-distillation setting in CIFAR100 and TinyImagenet \textbf{(left two figures)}, we find that the student underfits teacher's low-confidence points even after an extended period of training (roughly $2\times$ longer). On the \textbf{right}, we find in the CIFAR100 setting that underfitting occurs even for smaller batch sizes.}
    }
    \label{fig:ablations}
\end{figure*}

\textbf{A note on distillation weight.}  For all of our students (except in ImageNet), we fix the distillation weight to be $1.0$ (and so there is no one-hot loss). This is because we are interested in studying deviations under the distillation loss; after all, it is most surprising when the student deviates from the teacher when trained on a pure distillation loss which disincentivizes any deviations.

Nevertheless, for  ImageNet, we follow \citet{Cho:2019} and set the distillation weight to be small, at $0.1$ (and correspondingly, the one-hot weight to be $0.9$). We still observe confidence exaggeration in this setting in Fig~\ref{fig:in-logit-plots}. Thus, the phenomenon is robust to this hyperparameter. 

\textbf{Scatter plot for other metrics:} So far we have looked at student-teacher deviations via scatter plots of the probabilities on the teacher's top class, {after applying a logit transformation}. It is natural to ask what these plots would look like under other variations. We explore this in Fig~\ref{fig:metrics} for the CIFAR100 ResNet-56 self-distillation setting.

For easy reference, in the top left of Fig~\ref{fig:metrics}, we first show the standard logit-transformed probabilities plot where we find the underfitting phenomenon.  In the second top figure, we then directly plot the probabilities instead of applying the logit transformation on top of it. We find that the underfitting phenomenon does not prominently stand out here (although visible upon scrutiny, if we examine below the $X=Y$ line for $X \approx 0$). This illegibility is because small probability values tend to concentrate around $0$; the logit transform however acts as a magnifying lens onto the behavior of small probability values.
For the third top figure, we provide a scatter plot of entropy values of the teacher and student probability values to determine if the student distinctively deviates in terms of entropy from the teacher. It is not clear what characteristic behavior appears in this plot.

In the bottom plots, on the $Y$ axis we plot the {KL-divergence} of the student's probability from the teacher's probability. Along the $X$ axis we plot the same quantities as in the top row's three plots.
 Here, across the board, we observe behavior that is aligned with our earlier findings: the KL-divergence of the student tends to be higher on teacher's lower-confidence points, where ``lower confidence'' can be interpreted as either points where its top probability is low, or points where the teacher is ``confused'' enough to have high entropy.

\begin{figure*}[!t]
\centering
\subfloat{
\begin{tabular}[b]{@{}c@{}c@{}}
\includegraphics[width=.21\linewidth]{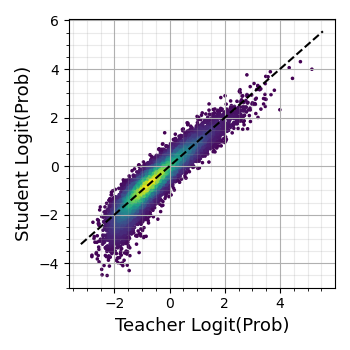}%
\includegraphics[width=.21\linewidth]{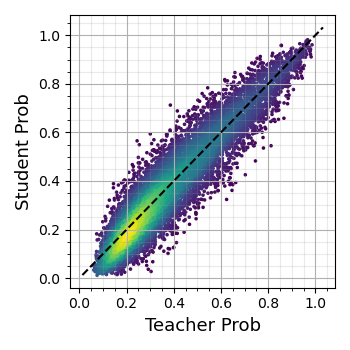}
\includegraphics[width=.21\linewidth]{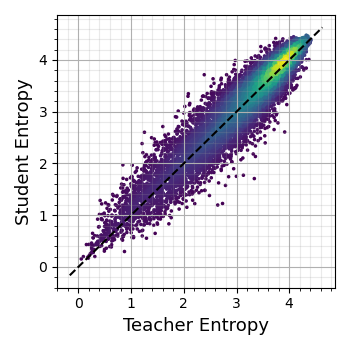}\\
\includegraphics[width=.21\linewidth]{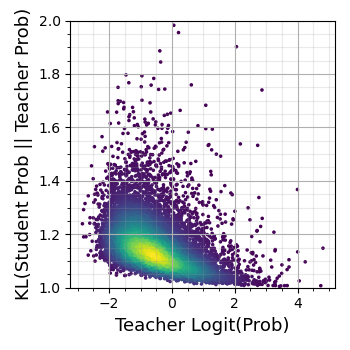}%
\includegraphics[width=.21\linewidth]{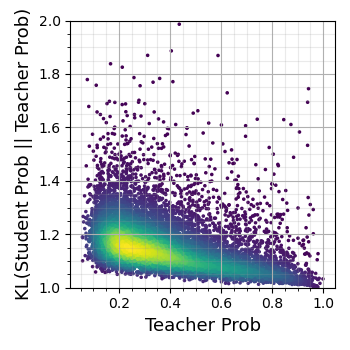}%
\includegraphics[width=.21\linewidth]{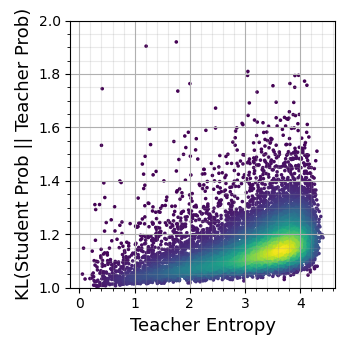}
\end{tabular}%
}
    \caption{{\textbf{Scatter plots for various metrics:} While in the main paper we presented scatter plots of logit-transformed probabilities, here we present scatter plots for various metrics, including the probabilities themselves, entropy of the probabilities, and the KL divergence of the student probabilities from the teacher. We find that the KL-divergence plots capture similar intuition as our logit-transformed probability plots. On the other hand, directly plotting the probabilities themselves is not as visually informative. 
    }}
    \label{fig:metrics}
\end{figure*}

\clearpage
\section{Further experiments verifying eigenspace regularization}
\label{app:eigenspace-verify}

\subsection{Description of settings}

In this section, we demonstrate the theoretical claims in \S\ref{sec:eigenspace} in practice even in situations where our theoretical assumptions do not hold good. We go beyond our assumptions in the following ways:
\begin{enumerate}
    \item  We consider three architectures: a linear random features model, an MLP and a CNN. 
    \item All are trained with the cross-entropy loss (instead of the squared error loss).
    \item We consider multi-class problems instead of scalar-valued problems.
    \item We use a finite learning rate with minibatches and Adam. 
    \item We test on a noisy-MNIST dataset, MNIST and CIFAR10 dataset. 
\end{enumerate}

We provide exact details of these three settings in Table~\ref{tbl:toy-training-settings}.

\begin{table}[!t]
    \centering
        \caption{Summary of the more general training settings used to verify our theoretical claim.}
    \label{tbl:toy-training-settings}
    \renewcommand{\arraystretch}{1.25}
    \begin{tabular}{@{}llll@{}}
        \toprule
        \toprule
        \textbf{Hyperparameter} & \textbf{Noisy-MNIST/RandomFeatures} & \textbf{MNIST/MLP} & \textbf{CIFAR10/CNN} \\
        \toprule
        Width                  & $5000$ ReLU Random Features    & $1000$ & $100$ \\
        Kernel & - & - &  $(6,6)$ \\
        Max pool &- & - &  $(2,2)$ \\
        Depth                  & $1$    & $2$ & $3$  \\
        Number of Classes & $10$    & $10$ & $10$ \\
        Training data size     & $128$    & $128$ & $8192$ \\
        Batch size                  & $128$    & $32$ & $128$ \\
        Epochs                      & $40$     & $20$ & $40$ \\
        Label Noise              & $25\%$ (uniform)    & None & None \\
        Learning rate          & $10^{-3}$     & $10^{-4}$  & $10^{-4}$ \\
        Distillation weight         & $1.0$       & $1.0$& $1.0$ \\
        Distillation temperature    & $4.0$       & $4.0$ & $4.0$ \\
        Optimizer          & Adam     & Adam &  Adam \\

        \bottomrule
    \end{tabular}

\end{table}

\subsection{Observations} 

Through the following observations in our setups above, we establish how our insights generalize well beyond our particular theoretical setting:

\begin{enumerate}
    \item In all these settings, the student fails to match the teacher's probabilities adequately, as seen in Fig~\ref{fig:toy-scatter-plots}. This is despite the fact that they both share the same representational capacity. Furthermore, we find a systematic underfitting of the low-confidence points. 
    \item  At the same time, we also observe in Fig~\ref{fig:mnist-rf-ev}, Fig~\ref{fig:mnist-mlp-ev}, Fig~\ref{fig:cifar-cnn-ev} that the convergence rate of the student is much faster along the top eigendirections when compared to the teacher in nearly all the pairs of eigendirections that we randomly picked to examine. See \S\ref{app:ev-plot-how} for how exactly these plots are computed. Note that these plots are shown for the first layer parameters (with respect to the eigenspace of the raw inputs). We show some preliminary evidence that these can be extended to subsequent layers as well (see Fig~\ref{fig:mnist-hidden-mlp-ev}, \ref{fig:cnn-hidden-cnn-ev}).
    \item We also confirm the claim we made in Sec~\ref{sec:connecting-the-three} to connect the exaggeration of confidence levels to the exaggeration of bias in the eigenspace.  
    In Fig~\ref{fig:toy-scatter-plots} (left), we see that on the mislabeled examples in the NoisyMNIST setting, the teacher has low confidence; the student has even lower confidence on these points. For the sake of completeness, we also show that these noisy examples are indeed fit by the bottom eigendirections in Fig~\ref{fig:mislabeled-ev}.  Thus, naturally, a slower convergence along the bottom eigendirections would lead to underfitting of the mislabeled data.
\end{enumerate}

Thus, our insights from the linear regression setting in \S\ref{sec:eigenspace} apply to a wider range of settings. We also find that underfitting happens in these settings, reinforcing the connection between the eigenspace regularization effect and underfitting.

\subsection{How eigenvalue trajectories are plotted}
\label{app:ev-plot-how}

\begin{figure*}[!t]
\centering
\includegraphics[width=0.5\textwidth]{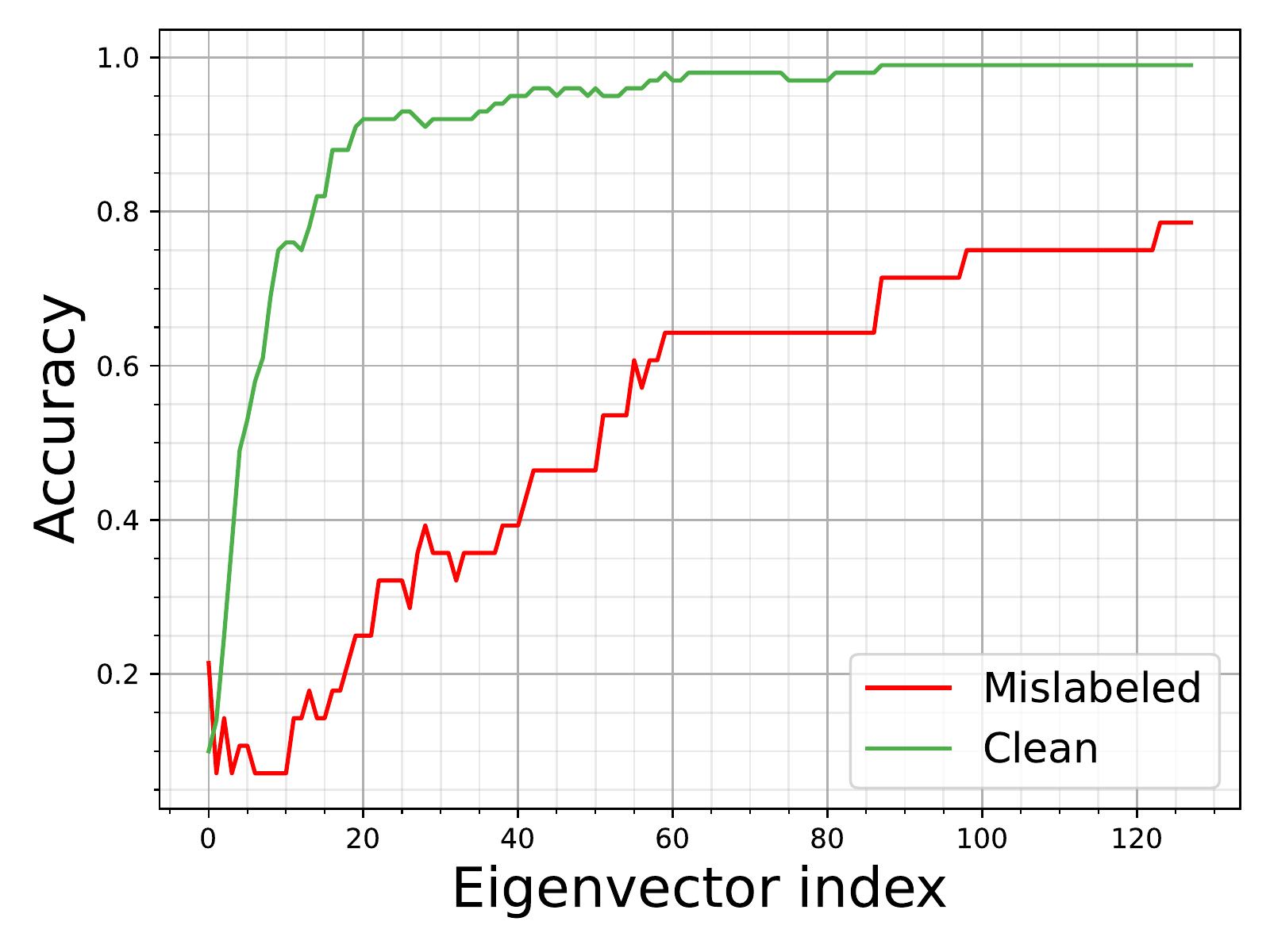}
    \caption{\textbf{Bottom eigenvectors help fit mislabeled data:} For the sake of completeness, in the NoisyMNIST setting we report how the accuracy of the model ($Y$ axis) degrades as we retain only components of the weights along the top $K$ eigendirections ($K$ corresponds to $X$ axis). The accuracy on the mislabeled data, as expected, degrades quickly as we lose the bottommost eigenvectors, while the accuracy on clean data is preserved even until $K$ goes as small as $20$.}
    \label{fig:mislabeled-ev}
\end{figure*}

\textbf{How eigendirection trajectories are constructed.}

In our theoretical analysis, we looked at how the component of the weight vector along a data eigendirection would evolve over time. To study this quantity in more general settings, there are two generalizations we must tackle. First, we have to deal with weight {matrices} or {tensors} rather than vectors. Next, for the second and higher-layer weight matrices, it is not clear what corresponding eigenspace we must consider, since the corresponding input is not fixed over time.  

Below, we describe how we address these challenges. Our main results in Fig~\ref{fig:mnist-rf-ev}, Fig~\ref{fig:mnist-mlp-ev}, Fig~\ref{fig:cifar-cnn-ev} are focused on the first layer weights, where the second challenge is automatically resolved (the eigenspace is fixed to be that of the fixed input data). 
Later, we show some preliminary extensions to subsequent layers.

\textbf{How data eigendirections are computed.} For the case of the linear model and MLP model, we  compute the eigendirections $\vecv_1, \vecv_2, \hdots \in \mathbb{R}^{d}$ directly from the training input features. Here, $p$ is the dimensionality of the (vectorized) data. In the linear model this equals the number of random features, and in the MLP model this is the dimensionality of the raw data (e.g., $784$ for MNIST). 
For the convolutional model, we first take {\em random} {patches} of the images of the same shape as the kernel (say $(K,K,C)$ where $C$ is the number of channels). We vectorize these patches into $\mathbb{R}^p$ where $p= K\cdot K \cdot C$ before computing the eigendirections of the data. 

\textbf{How weight components along eigendirections are computed.}
First we transform our weights into a matrix $\vecW \in \mathbb{R}^{p \times h}$. For the linear and MLP model, we let $\vecW \in \mathbb{R}^{p \times h}$ be the weight matrix applied on the $p$-dimensional data. Here $h$ is the number of outputs of this matrix. In the case of random features, $h$ equals the number of classes, and in the case of the MLP, $h$ is the number of output hidden units of that layer. For the CNN, we flatten the 4-dimensional convolutional weights into $\vecW \in \mathbb{R}^{p \times h}$ where $p=K\cdot K\cdot C$. Here, $h$ is the number of output hidden units of that layer. 

Having appropriately transformed our weights into a matrix $\vecW$, for any index $k$, we calculate the component of the weights along that eigendirection as $\vecW^T \vecv_k$; we further scalarize this as $\|\vecW^T\vecv_k\|_2$. For the plots, we pick two random eigendirections and plot the projection of the weights along those over the course of time.

\textbf{How to read the plots.} In all the plots, the bottom eigendirection is along the $Y$ axis, the top along the $X$ axis. The final weights of either model are indicated by a $\star$. When we say the model shows ``implicit bias'', we mean that it converges faster along the top direction in the $X$ axis than the $Y$ axis. This can be inferred by comparing what \textit{fraction} of the $X$ and $Y$ axes have been covered at any point. Typically, we find that the progress along $X$ axis dominates that along the $Y$ axis. Intuitively, when this bias is extreme, the trajectory would reach its final $X$ axis value first with no displacement along the $Y$ axis, and only then take a sharp right-angle turn to progress along the $Y$ axis. In practice, we see a softer form of this bias where, informally put, the trajectory takes a ``convex'' shape. For the student however, since this bias is strong, the trajectory tends more towards the sharper turn (and is more ``strongly convex'').

\textbf{Extending to subsequent layers.} The main challenge in extending these plots to a subsequent layer is the fact that these layers act on a time-evolving eigenspace, one that corresponds to the hidden representation of the first layer at any given time. As a preliminary experiment, we fix this eigenspace to be that of the {\em teacher}'s hidden representation at the \textit{end} of its training. We then train the student with the {\em same initialization} as that of the teacher so that there is a meaningful mapping between the representation of the two (at least in simple settings, all models originating from the same initialization are known to share interchangeable representations.) Note that we enforce the same initialization in all our previous plots as well. Finally, we plot the student and the teacher's weights projected along the fixed eigenspace of the teacher's representation.

\begin{figure*}[!t]
\centering
\subfloat{
\begin{tabular}[b]{@{}c@{}c@{}}
\includegraphics[width=.30\linewidth]{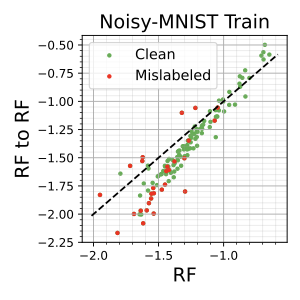}%
\includegraphics[width=.30\linewidth]{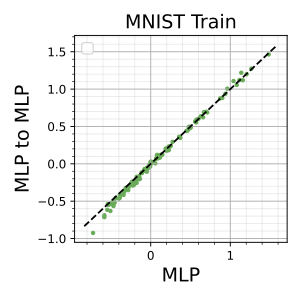}%
\includegraphics[width=.30\linewidth]{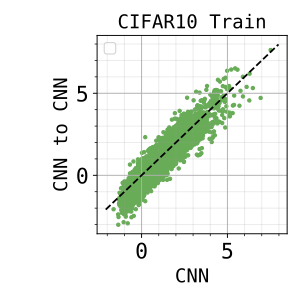}\\
\includegraphics[width=.30\linewidth]{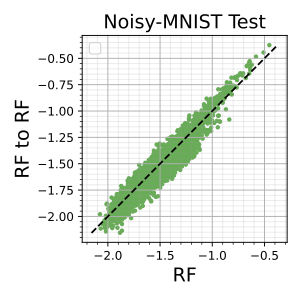}%
\includegraphics[width=.30\linewidth]{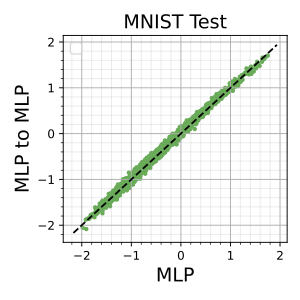}%
\includegraphics[width=.30\linewidth]{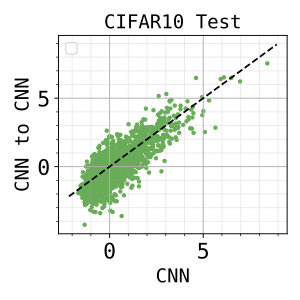}\\
\end{tabular}%
}
    \caption{{\textbf{Confidence exaggeration verifying our theory:} We plot the logit-logit scatter plots, similar to ~\S\ref{sec:margin}, for the three settings in ~\S\ref{app:eigenspace-verify} --- these are also the settings where we verify that distillation exaggerates the implicit bias. Each column corresponds to a different setting, while the top and bottom row correspond to train and test data respectively. Across all the three settings, we find low-confidence underfitting, particularly in the training dataset. 
    }}
    \label{fig:toy-scatter-plots}
\end{figure*}

\subsection{Verifying eigenspace regularization for random features on NoisyMNIST}

Please refer Fig~\ref{fig:mnist-rf-ev}.

\begin{figure*}[!t]
\centering
\subfloat{
\includegraphics[width=\linewidth]{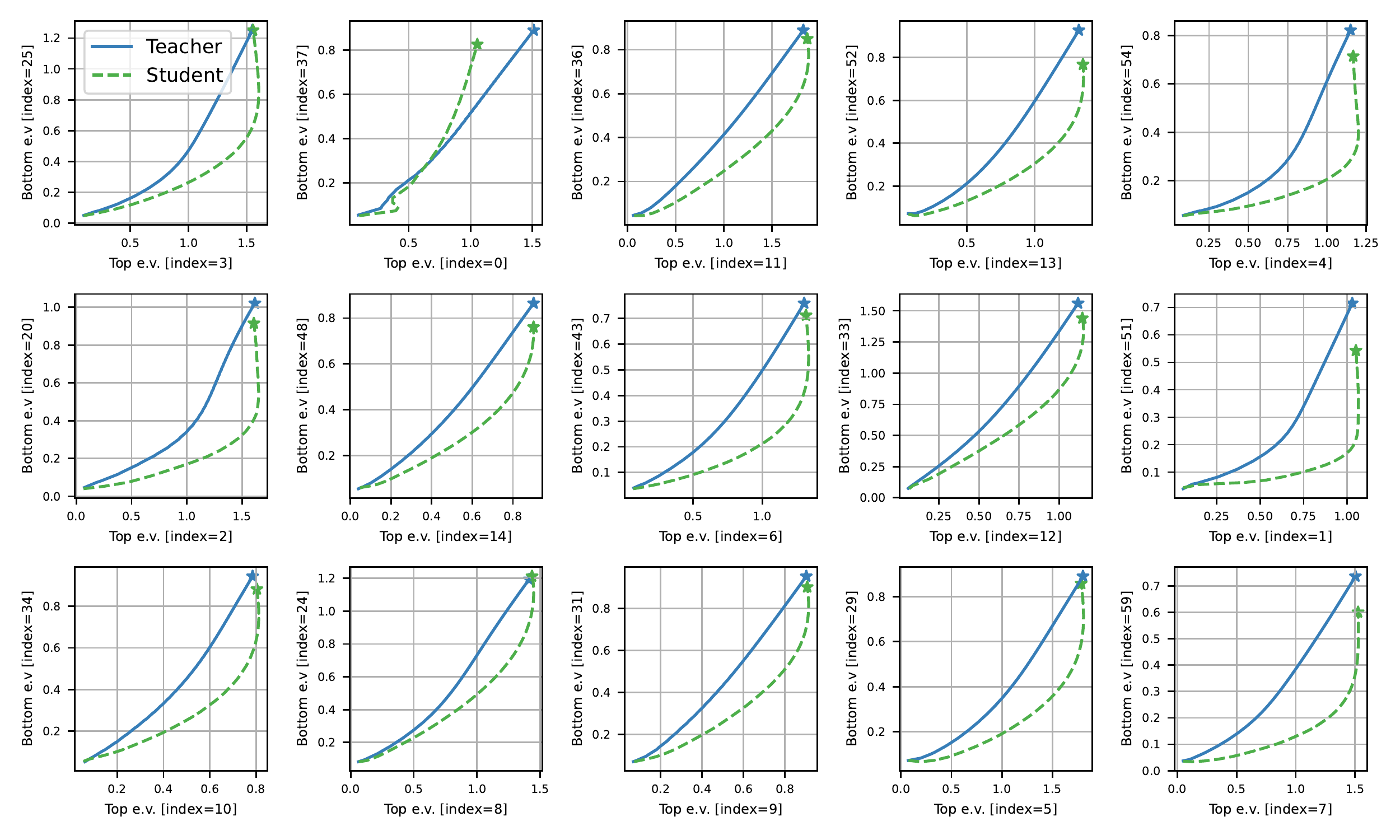}
}
    \caption{\textbf{Eigenspace convergence plots verifying the eigenspace theory for NoisyMNIST-RandomFeatures setting}: In all these plots, the $X$ axis corresponds to the top eigenvector and the $Y$ axis to the bottom eigenvector (see \S\ref{app:eigenspace-verify} for how they are randomly picked). Each plot shows the trajectory projected onto the two eigendirections with the $\star$ corresponding to the final parameters. In all but one case we find that both the student and the teacher converge faster to their final $X$ value, than to their $Y$ value showing that both have a bias towards higher eigendirections. But importantly, this bias is exaggerated for the student in all cases (except the one case in top row, second column), proving our main theoretical claim in \S\ref{sec:eigenspace} in a more general setting with multi-class cross-entropy loss, finite learning rate etc., See ~\S\ref{app:eigenspace-verify} for discussion.}
    \label{fig:mnist-rf-ev}
\end{figure*}

\subsection{Verifying eigenspace regularization for MLP on MNIST}

Please refer Fig~\ref{fig:mnist-mlp-ev}.

\begin{figure*}[!t]
\centering
\subfloat{
\includegraphics[width=\linewidth]{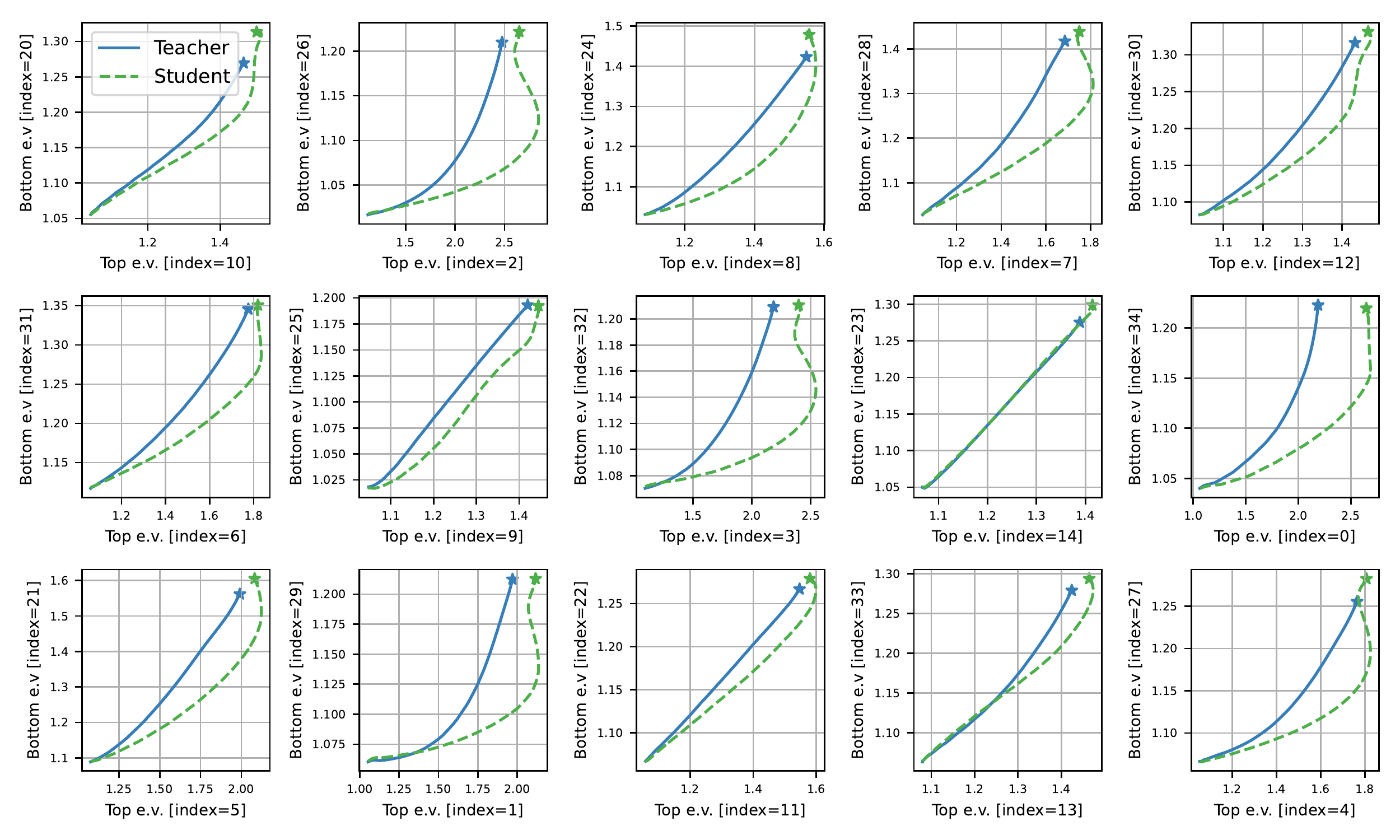}
}
    \caption{\textbf{Eigenspace convergence plots verifying the eigenspace theory for MNIST-MLP setting }: In all cases (except one), we find that the student converges faster to the final $X$ value of the teacher than it does along the $Y$ axis; in the one exceptional case (row 2, col 4), we do not see any difference. This demonstrates our main theoretical claim in \S\ref{sec:eigenspace} in a neural network setting. See ~\S\ref{app:eigenspace-verify} for discussion.}
    \label{fig:mnist-mlp-ev}
\end{figure*}

\subsection{Verifying eigenspace regularization for CNN on CIFAR10}

Please refer Fig~\ref{fig:cifar-cnn-ev}.

\begin{figure*}[!t]
\centering
\subfloat{
\includegraphics[width=\linewidth]{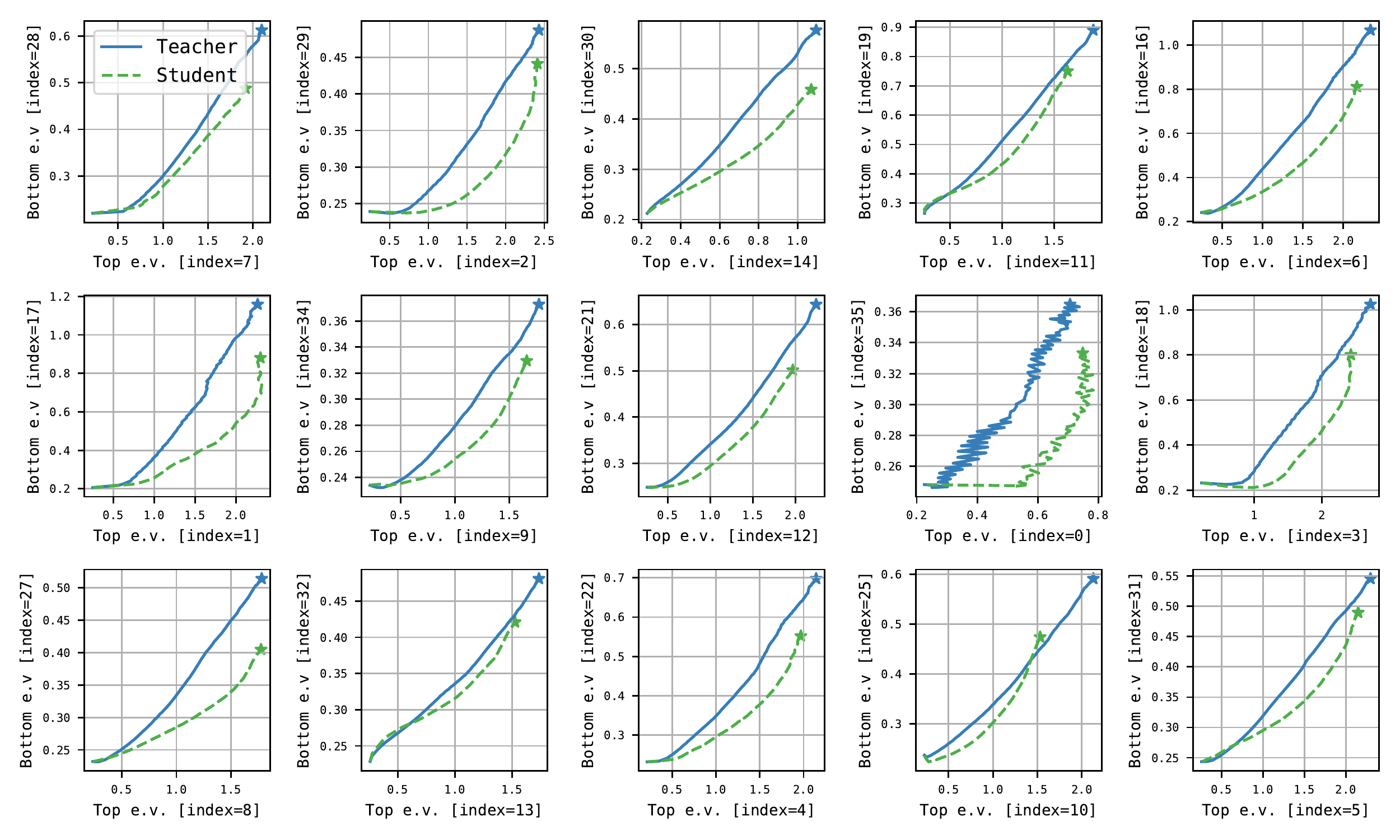}

}
    \caption{\textbf{Eigenspace convergence plots verifying the eigenspace theory for CIFAR10-CNN setting}: In \textit{all} cases, we find that the student converges faster to the final $X$ value of the teacher than it does along the $Y$ axis. This demonstrates our main theoretical claim in \S\ref{sec:eigenspace} in a \textit{convolutional} neural network setting. See ~\S\ref{app:eigenspace-verify} for discussion.
    \label{fig:cifar-cnn-ev}}
\end{figure*}

\subsection{Extending to intermediate layers}
\label{app:intermediate}
Please refer Fig~\ref{fig:mnist-hidden-mlp-ev} and Fig~\ref{fig:cnn-hidden-cnn-ev}.

\begin{figure*}[!t]
\centering
\subfloat{
\includegraphics[width=\linewidth]{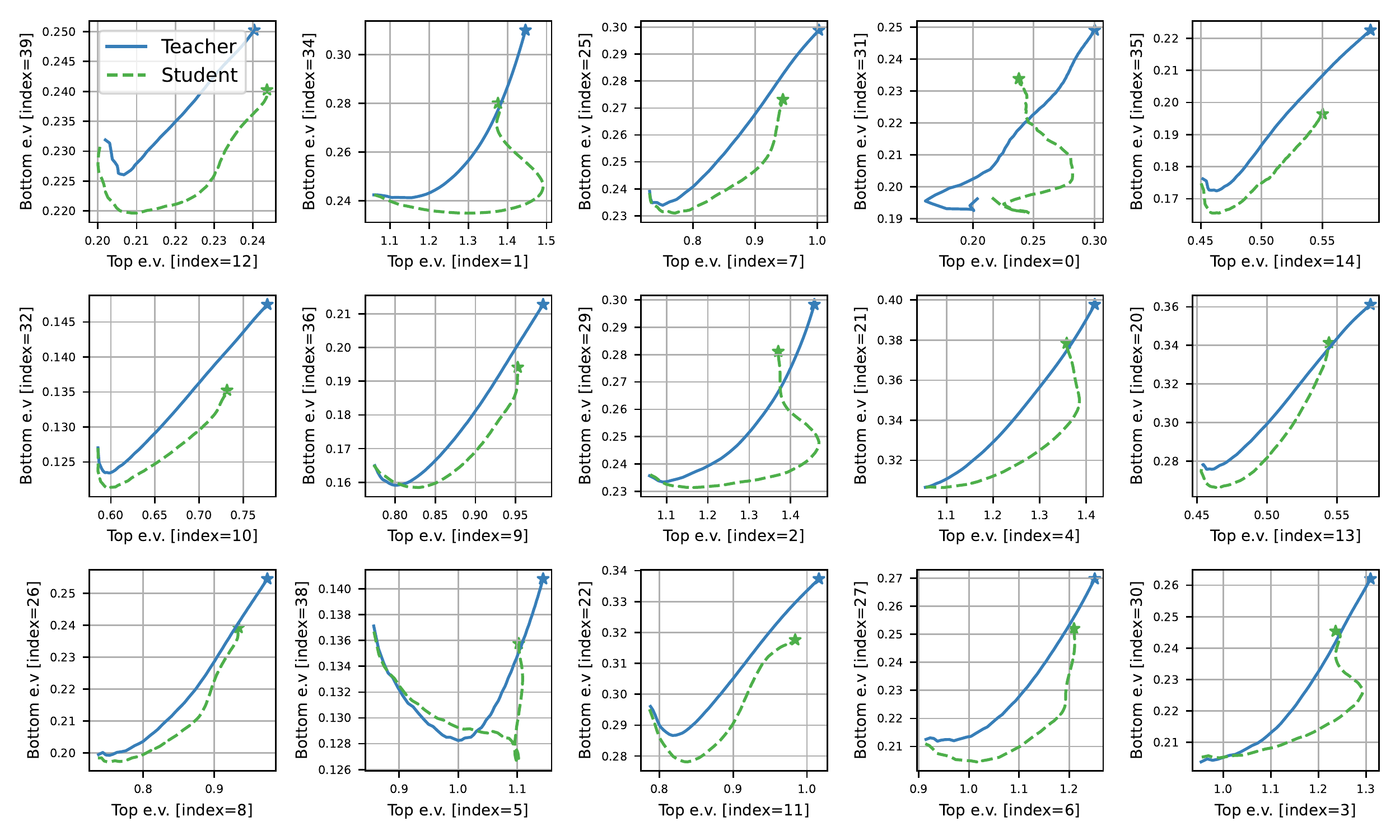}
}
    \caption{\textbf{Eigenspace convergence plots providing preliminary verification the eigenspace theory for the \textit{intermediate} layer in the MNIST-MLP setting}: In all cases (except top row, fourth), we find that the student converges faster to the final $X$ value of the teacher than it does along the $Y$ axis. This demonstrates our main theoretical claim in \S\ref{sec:eigenspace} in an {\em hidden layer} of a neural network. Note that these plots are, as one would expect, less well-behaved than the first-layer plots in Fig~\ref{fig:mnist-mlp-ev}. See \S\ref{app:eigenspace-verify} for discussion.}
    \label{fig:mnist-hidden-mlp-ev}
\end{figure*}

\begin{figure*}[!t]
\centering
\subfloat{
\includegraphics[width=\linewidth]{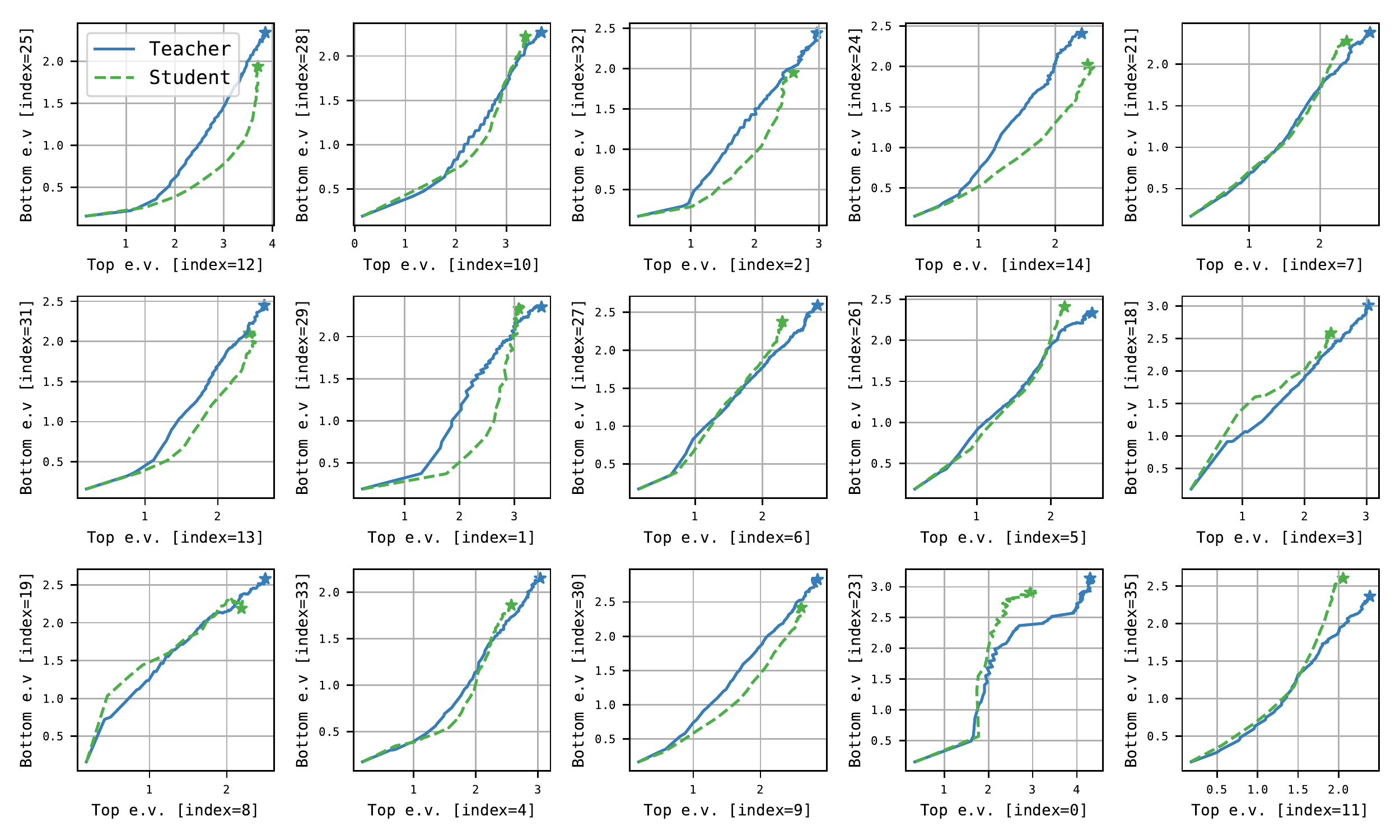}
}
    \caption{\textbf{Eigenspace convergence plots providing preliminary verification of the eigenspace theory for the \textit{intermediate} layer CIFAR-CNN setting}:
    Here, we find that in a majority of the slices (indexed as 1,2,3,4,6,7,12 and 13 in row-major order), the student has an exaggerated bias than the teacher;  in 5 slices (indexed as 2,5,8,9 and 12), there is little change in bias; in 4 slices the student shows a de-exaggerated bias than the teacher.   Note that these plots are, as one would expect, less well-behaved than the first-layer plots in Fig~\ref{fig:cifar-cnn-ev}. See \S\ref{app:eigenspace-verify} for discussion.}
    \label{fig:cnn-hidden-cnn-ev}
\end{figure*}

\clearpage

\section{Further experiments on loss-switching}
    \label{app:loss-switching}

In the main paper, we presented results on loss-switching between one-hot and distillation, inspired by prior work \citep{Cho:2019,Zhou:2020b,Jafari:2021} that has proposed switching \textit{from} distillation \textit{to} one-hot. We specifically demonstrated the effect of this switch and the reverse, in a controlled CIFAR100 experiment, one with an interpolating and another with a non-interpolating teacher. Here, we present two more results: one with an interpolating CIFAR100 teacher in different hyperparameter settings (see v1 setting in \S\ref{app:hyperparameters}) and another with a non-interpolating TinyImagenet teacher. These plots are shown in Fig~\ref{fig:loss-switching-tinyin}.  We also present how the logit-logit plots of the student and teacher evolve over time for both settings in Fig~\ref{fig:tinyin-logit-evolution} and Fig~\ref{fig:cf100-logit-evolution}.

We make the following observations for the CIFAR100 setting:
\begin{enumerate}
    \item Corroborating our effect of the interpolating teacher in CIFAR100, we again find that even for this interpolating teacher, switching to one-hot in the middle of training surprisingly hurts accuracy.
    \item  Remarkably, we find that for CIFAR100 switching to distillation towards the end of training, is able to regain nearly all of distillation's gains.
    \item Fig~\ref{fig:cf100-logit-evolution} shows that switching to distillation is able to introduce the confidence exaggeration behavior even from the middle of training; switching to one-hot is able to suppress this behavior. 
\end{enumerate}
 Note that here training is supposed to end at $21k$ steps, but we have extended it until $30k$ steps to look for any long-term effects of the switch.

In the case of TinyImagenet,
\begin{enumerate}
    \item For a distilled model,
switching to one-hot in the middle of training increases accuracy beyond even the purely distilled model. This is in line with our hypothesis that such a switch would be beneficial under a non-interpolating teacher.
    \item Interestingly, for a one-hot-trained model, switching to distillation \textit{is} helpful enough to regain a significant fraction of distillation's gains. However, it does not gain as much accuracy as the distillation-to-one-hot switch. 
    \item Both the one-hot-trained model and the model which switched to one-hot, suffer in accuracy when trained for a long time. This suggests that any switch to one-hot must be done only for a short amount of time. 
    \item Fig~\ref{fig:tinyin-logit-evolution} shows that switching to distillation is able to introduce the confidence exaggeration behavior; switching to one-hot is able to suppress this deviation. This replicates the same observation we make for CIFAR-100 in Fig~\ref{fig:cf100-logit-evolution}.
\end{enumerate}

\begin{figure*}[!t]
\centering
\subfloat{
\includegraphics[width=.40\linewidth]{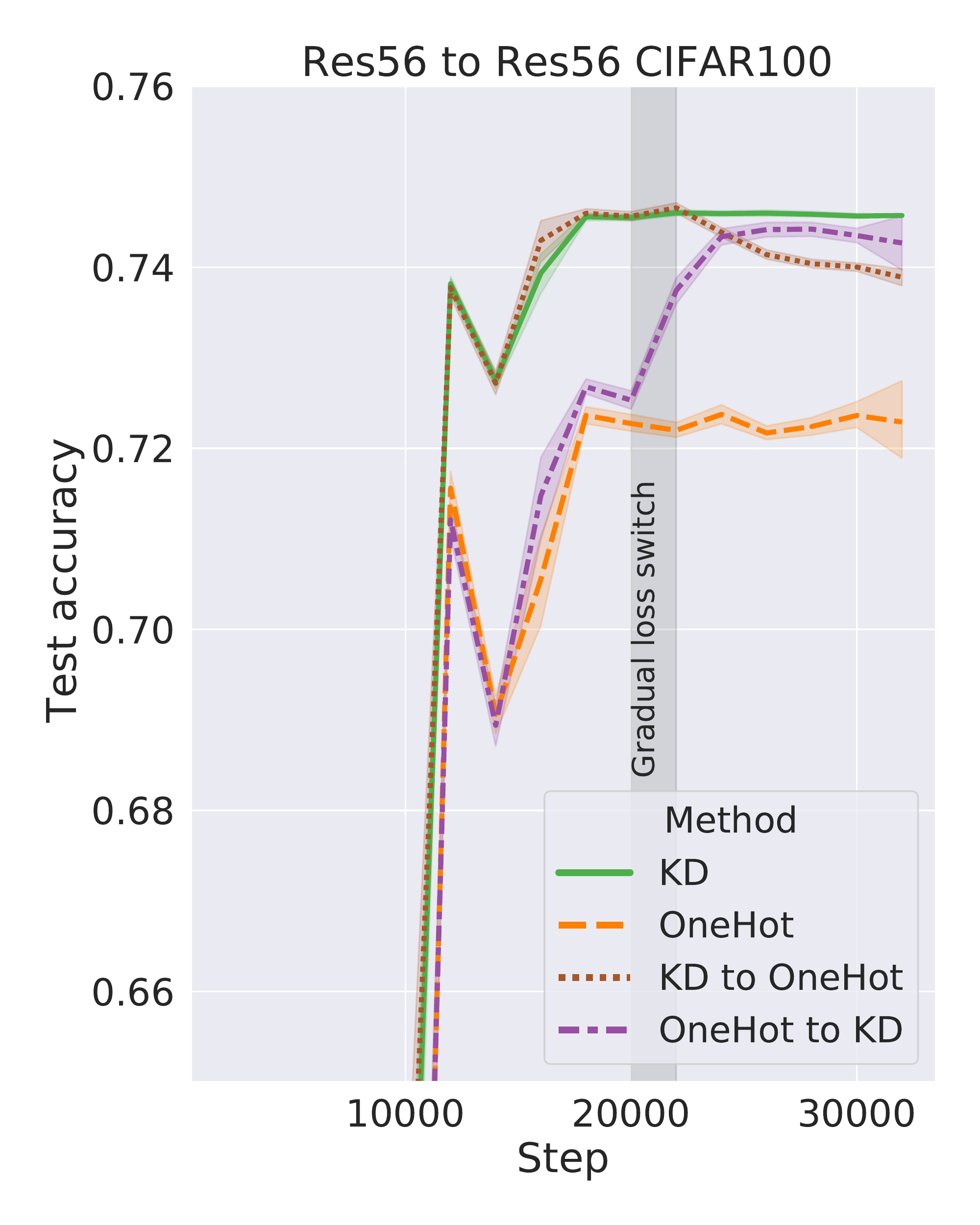}
\includegraphics[width=.40\linewidth]{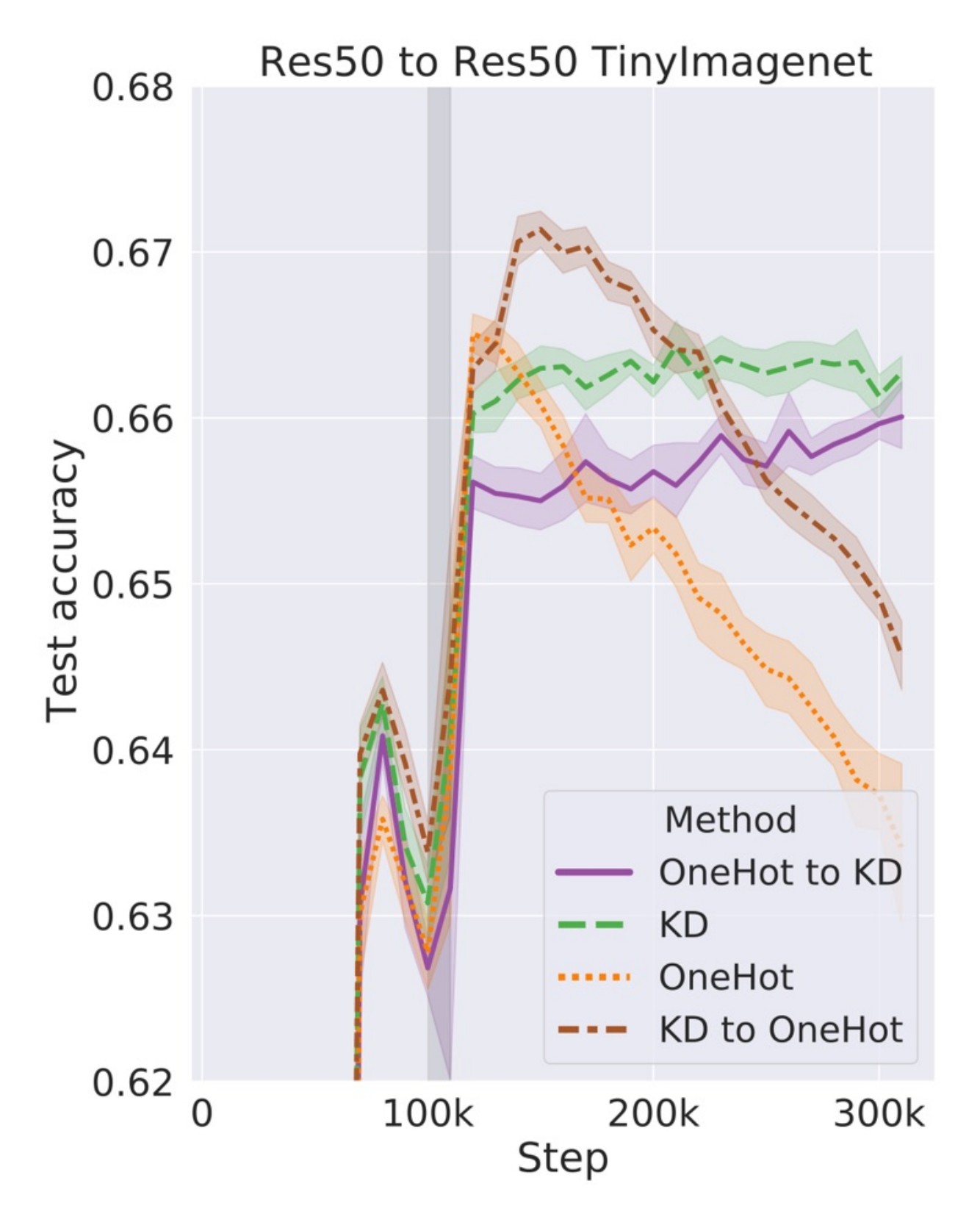}
}
        \caption{\textbf{Trajectory of test accuracy for loss-switching over longer periods of time:}
    We gradually change the loss for our self-distillation settings {in CIFAR100} and TinyImagenet and extend training {for a longer period of time}. Note that the teacher for the CIFAR100 setting is interpolating while that for the TinyImagenet setting is not. This results in different effects when the student switchs to a one-hot loss, wherein it helps under the non-interpolating teacher and hurts for the interpolating teacher.
    }
    \label{fig:loss-switching-tinyin}
\end{figure*}

\begin{figure*}[!t]
\centering
\subfloat[One-hot and self-distillation.]{
\begin{tabular}[b]{@{}c@{}}
\adjincludegraphics[width=.5\linewidth,trim= 0 0 0 2em, clip=true]{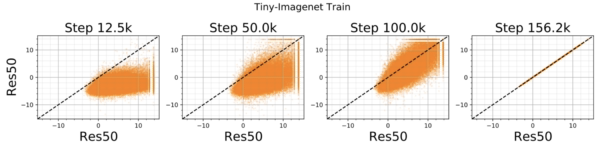}\\[-3pt]
\adjincludegraphics[width=.5\linewidth,trim= 0 0 0 2em, clip=true]{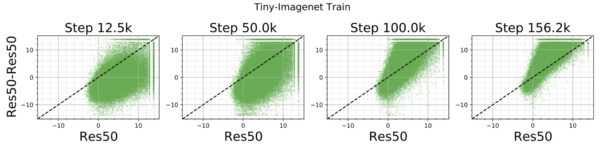}%
\end{tabular}%
}
\subfloat[Loss-switching to distillation/one-hot at $100k$ steps.]{
\begin{tabular}[b]{@{}c@{}}
\includegraphics[width=.5\linewidth,trim= 0 0 0 2em, clip=true]{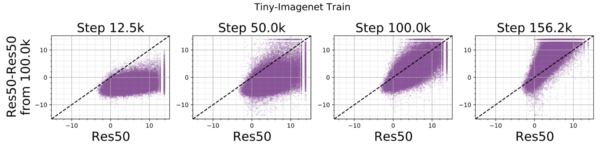}\\[-3pt]
\includegraphics[width=.5\linewidth,trim= 0 0 0 2em, clip=true]{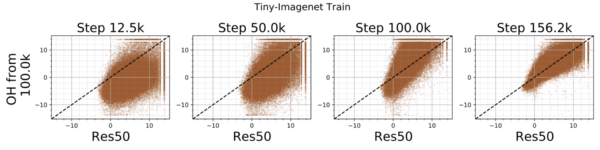}%
\end{tabular}%
}
    \caption{\textbf{Evolution of student-teacher deviations over various steps of training for TinyImageNet ResNet50 self-distillation setup:} On the \textbf{left}, we present plots similar to ~\S\ref{sec:margin} over the course of time for one-hot training (\textbf{top}) and distillation (\textbf{bottom}).
    On the \textbf{right}, we present similar plots with the loss switched to distillation (\textbf{top}) and one-hot (\textbf{bottom}) right after $100k$ steps. We observe that switching to distillation immediately introduces an exaggeration of the confidence levels. 
    }
    \label{fig:tinyin-logit-evolution}
\end{figure*}

\begin{figure*}[!t]
\centering
\subfloat[One-hot and self-distillation.]{
\begin{tabular}[b]{@{}c@{}}
\adjincludegraphics[width=.5\linewidth,trim= 0 0 0 2em, clip=true]{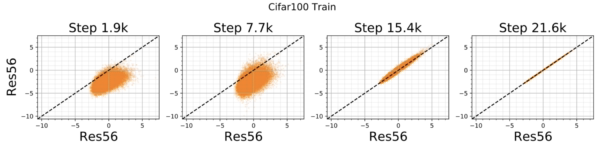}\\[-3pt]
\adjincludegraphics[width=.5\linewidth,trim= 0 0 0 2em, clip=true]{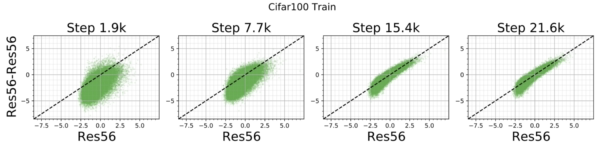}%
\end{tabular}%
}
\subfloat[Loss-switching to distillation/one-hot at $15k$ steps.]{
\begin{tabular}[b]{@{}c@{}}
\includegraphics[width=.5\linewidth,trim= 0 0 0 2em, clip=true]{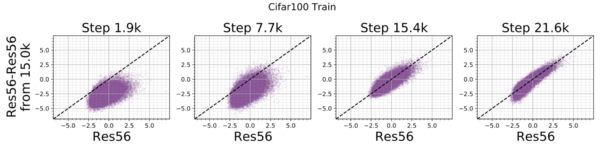}\\[-3pt]
\includegraphics[width=.5\linewidth,trim= 0 0 0 2em, clip=true]{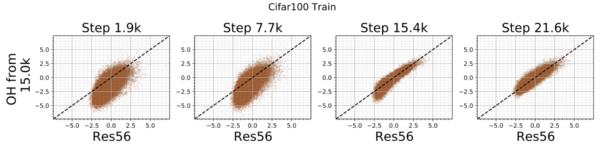}%
\end{tabular}%
}
    \caption{\textbf{Evolution of logit-logit plots over various steps of training for CIFAR100 ResNet56 self-distillation setup:} On the \textbf{left}, we present plots for one-hot training (\textbf{top}) and distillation (\textbf{bottom}). 
    On the \textbf{right}, we present similar plots the loss switched to distillation (\textbf{top}) and one-hot (\textbf{bottom}) at $15k$ steps. From the last two visualized plots in each, observe that switching to distillation introduces (a) underfitting of low-confidence points (b) while switching to one-hot curiously undoes this to an extent.
    }
    \label{fig:cf100-logit-evolution}
\end{figure*}

\end{document}